\documentclass[11pt]{article}

\usepackage[preprint]{acl}

\usepackage{times}
\usepackage{latexsym}

\usepackage[T1]{fontenc}

\usepackage[utf8]{inputenc}

\usepackage{microtype}

\usepackage{graphicx}

\usepackage{amsmath}
\usepackage{amssymb}
\usepackage{mathtools}
\usepackage{amsthm}
\usepackage{booktabs}
\usepackage{multirow}
\usepackage{arydshln}
\usepackage{subcaption}   
\usepackage{float}
\usepackage{xspace}

\floatstyle{ruled}
\newfloat{algorithm}{tbp}{loa}
\floatname{algorithm}{Algorithm}

\definecolor{LightCyan}{rgb}{0.88,1,1}
\definecolor{LightRed}{rgb}{1,0.85,0.85}
\definecolor{LightYellow}{rgb}{1,1,0.78}
\definecolor{Grey}{rgb}{0.8,0.8,0.8}
\definecolor{DarkGrey}{rgb}{0.5,0.5,0.5}
\definecolor{LightGreen}{rgb}{0.0,0.70,0.0}

\usepackage[capitalize,noabbrev]{cleveref}

\theoremstyle{plain}
\newtheorem{proposition}{Proposition}
\theoremstyle{definition}
\newtheorem{remark}{Remark}

\newcommand{\sysname}{ASRD\xspace}

\title{Follow the Latent Roadmap: Navigating Revocable Decoding for Diffusion LLMs with Anchor Tokens}

\author{
  {\bf Yizhen Yao}$^{1}$\thanks{\, The two authors contributed equally in this project.}, {\bf Qinglin Zhu}$^{1}$\footnotemark[\value{footnote}], {\bf Runcong Zhao}$^{1}$, {\bf Xiangxiang Dai}$^{2}$ \\
  {\bf Yanzheng Xiang}$^{1}$, {\bf Yulan He}$^{1,3}$, {\bf Lin Gui}$^{1}$ \\
  $^{1}$King's College London \quad $^{2}$The Chinese University of Hong Kong \\
  $^{3}$The Alan Turing Institute, UK \\
  \texttt{\{yizhen.yao, qinglin.1.zhu, runcong.zhao\}@kcl.ac.uk} \\
  \texttt{xxdai23@cse.cuhk.edu.hk} \\
  \texttt{\{yanzheng.xiang, yulan.he, lin.1.gui\}@kcl.ac.uk}
}

\begin{document}
\maketitle
\begin{abstract}
Diffusion Large Language Models (dLLMs) offer a promising avenue for parallel generation but face a trade-off between decoding speed and quality. While revocable decoding strategies attempt to mitigate errors by verifying and remasking tokens, they typically operate within a mixed-quality context. This leads to two critical failures: \textit{Error Propagation}, where new tokens absorb erroneous information from unreliable context, and \textit{Local Error Reinforcement}, where errors mutually reinforce each other to evade detection. To alleviate these challenges, we propose ASRD (Anchor-\mbox{Supervised} \mbox{Revocable} Decoding), a training-free framework that operates within the embedding space. ASRD explicitly decouples the decoding context into trusted \textit{Anchor Tokens}, which are identified via temporal consistency, and uncertain candidates. Leveraging a dynamic Anchor Token Cache, we introduce two complementary mechanisms: (1) Anchor-Guided Generation, which injects entropy-weighted anchor signals into masked positions to implicitly rectify attention toward the reliable global skeleton; and (2) Anchor-Perturbed Verification, which applies orthogonal perturbations to uncertain candidate tokens, destabilizing and remasking errors driven by fragile local consensus. Extensive experiments on math and coding benchmarks demonstrate that ASRD outperforms recent remasking baselines, achieving accuracy improvements of up to 6.4\% while accelerating inference throughput by up to 7.2$\times$. The code is available at \url{https://github.com/preordinary/ASRD}.
\end{abstract}

\section{Introduction}\label{sec:intro}

\begin{figure}[ht!]
    \centering
    \includegraphics[width=1\linewidth]{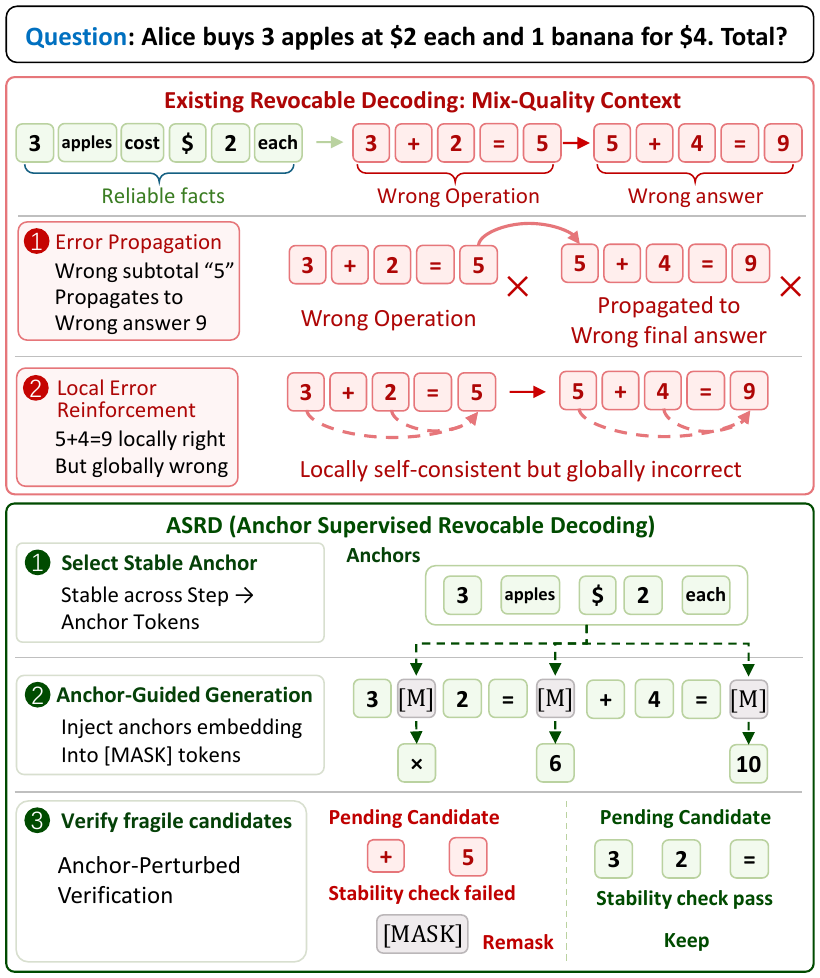}
\caption{
Overview of \sysname{} and its motivation. 
Existing revocable decoding verifies and generates tokens under a mixed-quality context, causing error propagation and local error reinforcement. 
\sysname{} instead selects temporally stable tokens as anchors, injects anchor guidance into masked positions, and verifies pending candidates with anchor-based perturbations. 
}
    \label{fig:sample}
\end{figure}

Diffusion Large Language Models (dLLMs) \citep{nie2025largelanguagediffusionmodels,ye2025dream,song2025seed} decode a sequence by progressively unmasking tokens from a fully-masked initialization, enabling parallel generation of multiple tokens per step rather than the sequential decoding of autoregressive models \citep{achiam2023gpt,grattafiori2024llama3herdmodels,kang2025parallelbench,li2025survey}. However, current threshold-based unmasking strategies \citep{wu2025fastdllmtrainingfreeaccelerationdiffusion,chen2025confidenceadaptivecoherentdecoding,hong2025wide} face a trade-off between speed and quality: a lower threshold commits more tokens per step but admits low-confidence predictions whose errors propagate through later steps \citep{azangulov2025parallelsamplingmaskeddiffusion}. Revocable decoding \citep{hong2025wide, dong2025saber, wang2025remasking} mitigates this by verifying committed tokens within the evolving sequence and \textit{remasking} those judged unreliable. To preserve parallelism, verification and generation share a single forward pass \citep{hong2025wide}, which forces every step to operate on a \textit{mixed-quality context} that interleaves reliable and tentative commitments.


This mixed-quality context compromises both the generation and verification phases of revocable decoding, leading to two critical challenges, as shown in Figure~\ref{fig:sample}:
(1) \textbf{Error Propagation} \citep{azangulov2025parallelsamplingmaskeddiffusion, bie2025llada2}: $\texttt{[MASK]}$ tokens are allowed to freely attend to the pending tokens during generation. This, however, hampers the new tokens' quality, since the generating tokens indiscriminately treat all pending tokens as reliable context, absorbing the potential errors from the pending tokens. Consequently, errors are propagated through the entire generation process, leading to incorrect final outputs. (2) \textbf{Local Error Reinforcement}: Pending tokens are also verified while surrounded by the potential errors from other pending tokens. These erroneous tokens reinforce each other into a locally consistent but globally incorrect cluster.

Addressing the above two failure modes requires two steps: first identifying which committed tokens are trustworthy, then using them to steer the remaining unconfirmed positions. We propose \textbf{\sysname{}} (\underline{A}nchor-\mbox{\underline{S}upervised} \mbox{\underline{R}evocable} \underline{D}ecoding), which carries out both steps in the embedding space, preserving compatibility with FlashAttention~\citep{dao2022flashattention}. The core of ASRD is to promote temporally consistent tokens into a dynamic \textbf{Anchor Token Cache (ATC)}, and to use the resulting trusted skeleton to steer every unconfirmed position. Based on ATC, we propose two complementary updates: (1) \textbf{Anchor-Guided Generation} injects an entropy-weighted anchor signal into mask embeddings to mitigate Error Propagation; (2) \textbf{Anchor-Perturbed Verification} probes committed tokens with an orthogonal anchor-derived signal, flipping only those sustained by Local Error Reinforcement so that they can be remasked for regeneration.

Our contributions are threefold: (1) We propose \sysname{}, a training-free revocable decoding paradigm whose intervention is confined to the embedding space, supplying anchor-derived guidance to every unconfirmed position. (2) We introduce two embedding-level strategies to handle mixed-quality contexts. (3) Serving as a \textit{plug-and-play enhancement} for existing dLLMs, ASRD is a training-free strategy that boosts performance on math and coding.

\section{Related Work}

\textbf{Diffusion Large Language Models (dLLMs).}
To overcome the sequential generation bottleneck of autoregressive LLMs, diffusion LLMs (dLLMs) have emerged as a promising alternative with potential parallel generation.
Early attempts span both continuous~\citep{mahabadi2024tess,strudel2022self,gong2022diffuseq,li2022diffusion} and discrete formulations~\citep{he2023diffusionbert,sahoo2024simple,austin2021structured,ou2024your,lou2024discrete,shi2024simplified}.
Among them, masked diffusion models (MDMs) have shown the most convincing scalability~\citep{gong2024scaling}: they operate directly in token space and iteratively denoise by predicting masked tokens for a fixed number of steps.
Recent progress includes strong open-source models such as LLaDA~\citep{nie2025largelanguagediffusionmodels} and Dream~\citep{ye2025dream}, as well as extensions to coding~\citep{gong2025diffucoder,xie2025dream},
and reasoning~\citep{zhao2025d1,wang2025revolutionizing,zhu2025latentrefinementdecodingenhancing}.
Despite this progress, dLLMs still face a gap between theoretical parallelism and practical performance, due to both expensive bidirectional attention and error accumulation under aggressive parallel unmasking.

\textbf{Efficient Inference for dLLMs.}
Existing efforts to accelerate dLLMs mainly fall into two directions: (i) reducing attention cost via approximate/reusable KV computation~\citep{wu2025fastdllmtrainingfreeaccelerationdiffusion,liu2025dllmcacheacceleratingdiffusionlarge,song2025sparse,ma2025dkv}, and (ii) improving parallel decoding via dynamic thresholding~\citep{israel2025accelerating,wang2025diffusionllmsfasterthanarinference,xiao2026streamingdllmacceleratingdiffusionllms} or error correction~\citep{wang2025remasking,hong2025wide,dong2025saber,xiang2026stopflipflopcontextpreservingverification}.
Dynamic thresholding adaptively adjusts how many tokens to unmask per step based on confidence, while error correction verifies newly unmasked tokens and remasks low-quality ones, which is generally more robust for complex tasks.
For instance, WINO~\citep{hong2025wide} employs a draft-and-verify mechanism with dual thresholds, and Saber~\citep{dong2025saber} combines confidence-aware acceleration with backtracking and remasking.
However, existing error-correction methods often verify tokens under a \emph{mixed-quality} context that may already contain erroneous tokens.

Instead, our work addresses this issue by introducing an \emph{anchor-based context} that explicitly decouples the context into trusted \emph{anchors} and uncertain \emph{candidates}, breaking local reinforcement and enabling more reliable generation and verification.

\begin{figure*}[ht!]
    \centering
\includegraphics[width=0.9\linewidth]{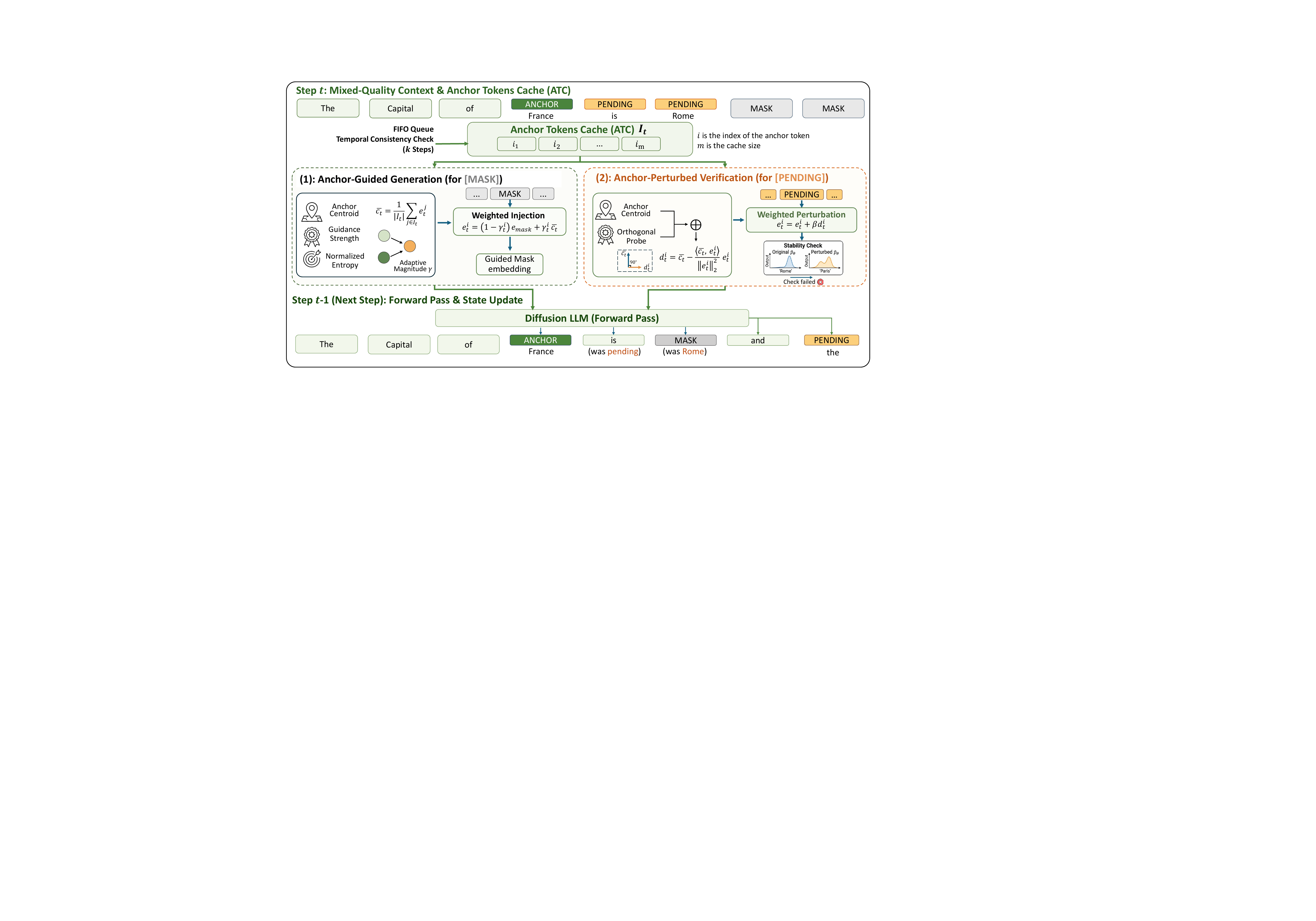}
    \caption{Overview of \sysname{}. At each decoding step, threshold-passed tokens are classified by temporal consistency: those stable across $k$ consecutive steps become \textit{anchors} (stored in the ATC), while the rest are \emph{pending tokens} awaiting verification. The anchor centroid $\bar{c}_t$ then supervises every unconfirmed position through two embedding-space updates. \textbf{Anchor-Guided Generation} (\S\ref{sec:mask_update}) replaces each mask embedding with an entropy-weighted convex blend toward $\bar{c}_t$, injecting a trusted prior before generation and mitigating \emph{Error Propagation}. \textbf{Anchor-Perturbed Verification} (\S\ref{sec:pending_update}) adds a fixed-coefficient probe orthogonal to the committed embedding, stress-testing pending tokens and flipping those sustained by \emph{Local Error Reinforcement} so they can be remasked.}
    \label{fig:frame}
\end{figure*}

\section{Preliminaries: Revocable Parallel Decoding}
\label{sec:prelim_revocable}
For dLLMs \citep{austin2021structured, ou2024your} and a targeting discrete decoding sequence: $\mathbf{x}_0 \in \{1, \dots, V\}^L$, where $V$ is the vocabulary size and $L$ is the sequence length, the forward process progressively corrupts tokens to a special $\texttt{[MASK]}$ state: $q(x_t^i = \texttt{[MASK]} \mid x_{t-1}^i) = \omega_t$, where each token at the $i$-th position, $i\in\{0,1,...,L-1\}$, is independently masked with probability $\omega_t$ at step $t$, and $t\in\{1,2,...,T\}$. 
Accordingly, the reverse process $p_\theta(\mathbf{x}_{t}|\mathbf{x}_{t+1})$ with parameter $\theta$ aims to iteratively unmask the sequence from $\mathbf{x}_T$ (fully masked) to $\mathbf{x}_0$ (clean text).
\begin{equation*}
\small
\underbrace{
x_0 \xrightarrow{\,q\,}x_1 \xrightarrow{\,q\,} \cdots \xrightarrow{\,q\,} 
}_{\text{forward masking (noising)}}
x_T 
\underbrace{
\xrightarrow{\,p_\theta\,} x_{T-1} \xrightarrow{\,p_\theta\,} \cdots \xrightarrow{\,p_\theta\,} x_0
}_{\text{reverse unmasking (denoising)}}
\end{equation*}

In threshold-based parallel decoding, at step $t$, based on the given state in previous step $t+1$, the model predicts $p_\theta(x_t^i \mid \mathbf{x}_{t+1})$ for all masked positions $i$ simultaneously. A token is unmasked and finalized if its confidence exceeds a pre-defined threshold $\tau$.


Revocable parallel decoding adds a verification-and-revision mechanism atop threshold decoding. At each decoding step $t$, the procedure consists of three phases: drafting, verification, and revision.
Let $\mathcal{M}_t = \{i : x_t^i = \texttt{[MASK]}\}$ denote the set of all mask positions at step $t$. 

\textbf{Drafting.} Select positions whose confidence exceeds $\tau$: $\mathcal{T}_t = \{i \in \mathcal{M}_{t+1} : \max_{v} p_\theta(x_0^i{=}v \mid \mathbf{x}_{t+1}) > \tau\}$
and assign each $i \in \mathcal{T}_t$ a candidate token $\breve{x}_t^i = \arg\max_{v} p_\theta(x_0^i{=}v \mid \mathbf{x}_{t+1})$.

\textbf{Verification.} For each $i \in \mathcal{T}_t$, a masked context $\mathbf{x}_t^{\setminus i}$ is built by re-masking position $i$ while keeping the other drafts in place; a forward pass yields the re-evaluated distribution $p_\theta(x_0^i \mid \mathbf{x}_t^{\setminus i})$, and a criterion $\mathcal{F}$ decides whether the draft survives.

\textbf{Revision.} After verification, the set of failed positions is denoted as $\mathcal{R}_t = \{i \in \mathcal{T}_t : \mathcal{F}(i) = \text{fail}\}$. Common revision rules include: \emph{Top-1 Stability} (position $i$ fails if $\arg\max_v p_\theta(x_{t-1}^i{=}v \mid \mathbf{x}_t^{\setminus i}) \neq \breve{x}_t^i$); \emph{Confidence threshold} (position $i$ fails if $\max_v p_\theta(x_{t-1}^i{=}v \mid \mathbf{x}_t^{\setminus i}) < \tau'$ for some threshold $\tau'$), and \emph{Margin criterion} (position $i$ fails if the probability margin decreases significantly). Failed positions are revoked by resetting them to the masked state. 

\section{Methodology}
\label{sec:methodology}
To address the two failure modes identified in Section~\ref{sec:intro}---error propagation during generation and local error reinforcement during verification---we propose \sysname{}. Rather than letting generation and verification share an unfiltered mixed-quality context, we maintain a set of anchors that supervises both. As shown in Figure~\ref{fig:frame}, at each step we use temporal consistency to partition newly generated token positions into anchor positions ($\mathcal{A}_t$) and pending positions ($\mathcal{P}_t$). Anchors are admitted into the Anchor Token Cache (ATC) that forms the trusted backbone of the sequence, and in the next step they supervise the generation of $\texttt{[MASK]}$ tokens via an entropy-weighted injection that suppresses error propagation, as well as the verification of pending tokens via a direction-specific perturbation that exposes local error reinforcement. Algorithm~\ref{alg:asrd} summarizes the complete decoding procedure.

\begin{algorithm}[t]
\caption{\sysname{} decoding with one model forward pass per step.}
\label{alg:asrd}
\small
\renewcommand{\arraystretch}{1.08}
\begin{tabular}{@{}r@{\hspace{0.5em}}p{0.84\columnwidth}@{}}
\multicolumn{2}{@{}p{0.98\columnwidth}@{}}{\textbf{Input:} Denoiser $p_\theta$; length $L$; steps $T$; threshold $\tau$; window $k$; cache capacity $m$; strengths $\alpha,\beta$.} \\
\multicolumn{2}{@{}p{0.98\columnwidth}@{}}{\textbf{Output:} Decoded sequence $\mathbf{x}$.} \\
1 & Initialize $\mathbf{x}\leftarrow[\texttt{MASK}]^L$, $\mathcal{I},\mathcal{P}\leftarrow\varnothing$, $\mathcal{H}^i\leftarrow()$, and $\bar{E}_0^i\leftarrow0$. \\
2 & \textbf{for} decoding step $s=1,\ldots,T$ \textbf{do} \\
3 & $\quad \mathcal{M}\leftarrow\{i:x^i=\texttt{MASK}\}$ and $\mathbf{E}\leftarrow\operatorname{Embed}(\mathbf{x})$. \\
4 & $\quad$\textbf{if} $\mathcal{I}\neq\varnothing$ \textbf{then} \\
5 & $\qquad \bar{c}\leftarrow |\mathcal{I}|^{-1}\sum_{j\in\mathcal{I}}\mathbf{e}^j$. \\
6 & $\qquad$\textbf{for each} $i\in\mathcal{M}$, set $\gamma^i\leftarrow\alpha\bar{E}_{s-1}^i$ and \\
  & $\qquad \mathbf{e}^i\leftarrow(1-\gamma^i)\mathbf{e}_{\mathrm{mask}}+\gamma^i\bar{c}$. \\
7 & $\qquad$\textbf{for each} $i\in\mathcal{P}$, set \\
  & $\qquad \mathbf{d}^i\leftarrow\bar{c}-\frac{\langle\bar{c},\mathbf{e}^i\rangle}{\|\mathbf{e}^i\|_2^2}\mathbf{e}^i$ and $\mathbf{e}^i\leftarrow\mathbf{e}^i+\beta\mathbf{d}^i$. \\
8 & $\quad$\textbf{end if} \\
9 & $\quad p_s\leftarrow p_\theta(\mathbf{E})$. \hfill $\triangleright$ One model forward pass \\
10 & $\quad \mathcal{R}\leftarrow\{i\in\mathcal{P}:\arg\max_v p_s^i(v)\neq\breve{x}^i\}$; \\
   & $\quad$set $x^i\leftarrow\texttt{MASK}$ for every $i\in\mathcal{R}$. \\
11 & $\quad \mathcal{T}_s\leftarrow\{i\in\mathcal{M}:\max_v p_s^i(v)>\tau\}$; \\
   & $\quad$set $\breve{x}^i\leftarrow\arg\max_v p_s^i(v)$ and $x^i\leftarrow\breve{x}^i$ for each $i\in\mathcal{T}_s$. \\
12 & $\quad$Append $\arg\max_v p_s^i(v)$ to $\mathcal{H}^i$ for each $i\in\mathcal{M}$; \\
   & $\quad \mathcal{A}_s\leftarrow\{i\in\mathcal{T}_s:\operatorname{Consistent}_k(\mathcal{H}^i)\}$. \\
13 & $\quad$\textbf{if} $s<k$ and $\mathcal{T}_s\neq\varnothing$ \textbf{then} \\
   & $\quad \mathcal{A}_s\leftarrow\{\arg\min_{i\in\mathcal{T}_s} H(p_s^i)\}$. \hfill $\triangleright$ Seed cache \\
14 & $\quad \mathcal{I}\leftarrow\operatorname{FIFO}_m(\mathcal{I}\cup\mathcal{A}_s)$ and $\mathcal{P}\leftarrow\mathcal{T}_s\setminus\mathcal{A}_s$. \\
15 & $\quad \bar{E}_s\leftarrow\operatorname{MinMaxNormalize}(\{H(p_s^i):i\in\mathcal{M}\})$ for use at the next step. \\
16 & \textbf{end for} \\
17 & \textbf{return} $\mathbf{x}$. \\
\end{tabular}
\end{algorithm}

\subsection{Anchor Token Cache (ATC)}
Not every threshold-passed token should serve as an anchor for downstream decisions: some merely exhibit transient confidence that fluctuates in later steps~\citep{chen2025confidenceadaptivecoherentdecoding, li2025diffusion}. We therefore promote a token to an \emph{anchor} only when its top-1 prediction stays consistent across $k$ consecutive steps. Concretely, let $\hat{a}^{i}_{s} = \arg\max_{v} p_\theta(x_0^i{=}v \mid \mathbf{x}_{s+1})$ denote the top-1 prediction at step $s$. We define the per-step anchor candidates at step $t$ as
\begin{equation*}
\mathcal{A}_t = \left\{ i \in \mathcal{T}_t \;:\; \hat{a}^{i}_{t-k+1} = \hat{a}^{i}_{t-k+2} = \cdots = \hat{a}^{i}_{t} \right\}
\label{eq:anchor_definition}
\end{equation*}

We show that the temporal consistency exponentially suppresses false anchors under idealized assumptions. Moreover, our experiments (Table~\ref{tab:k_ablation}) empirically show rapid drop in error rate. 


\begin{proposition}[Exponential Suppression of False Anchors]
\label{prop:false_anchor}
Assume that (i) there exists a positive semantic margin $\Delta_{\mathrm{sem}} > 0$
between the ground-truth token and any incorrect token $w$ in the ideal distribution,
and (ii) the model prediction can be decomposed as $p_\theta = p^* + \epsilon$,
and for any two candidates the differential noise
$\eta_s = \epsilon_{s, x_0^i} - \epsilon_{s, w}$ is zero-mean,
symmetric around $0$, and i.i.d.\ across decoding steps.
Then, for any position $i$ and any incorrect token $w \neq x_0^i$, the probability that $w$ is the top-1 prediction at position $i$ for $k$ consecutive steps decays exponentially in $k$:
\begin{equation*}
\mathbb{P}\!\left(\hat{a}^{i}_{t-k+1} = \cdots = \hat{a}^{i}_{t} = w\right) \le \exp(-\lambda k)
\end{equation*}
for some $\lambda > 0$ determined by the semantic margin and the noise distribution.\footnote{Proof in Appendix~\ref{app:theory}.}
\end{proposition}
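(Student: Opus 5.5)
The plan is to reduce the $k$-step event to a product of per-step events and bound each factor by a constant strictly below one. First, note that if $w$ is the top-1 prediction at step $s$, then in particular $p_{\theta,s}(w) \ge p_{\theta,s}(x_0^i)$. Using the decomposition $p_\theta = p^* + \epsilon$ from (ii), this is equivalent to
\begin{equation*}
p^*(x_0^i) - p^*(w) + \epsilon_{s,x_0^i} - \epsilon_{s,w} \le 0 .
\end{equation*}
By (i) this implies $\eta_s \le -\Delta_{\mathrm{sem}}$, where $\eta_s = \epsilon_{s,x_0^i} - \epsilon_{s,w}$. So the event $\{\hat a^i_{t-k+1}=\cdots=\hat a^i_t = w\}$ is contained in $\bigcap_{s=t-k+1}^{t}\{\eta_s \le -\Delta_{\mathrm{sem}}\}$. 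Here I treat $p^*$ as fixed across the window, since it is the ideal distribution at position $i$, so that $\Delta_{\mathrm{sem}}$ applies uniformly at every step.

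Next, I use the i.i.d. assumption in (ii) to factor the probability of the intersection as $q^k$, where
\begin{equation*}
q := \mathbb{P}(\eta \le -\Delta_{\mathrm{sem}}).
\end{equation*}
Setting $\lambda := -\log q$ gives the bound $\exp(-\lambda k)$. It then remains to show $\lambda>0$, that is, $q<1$. By symmetry of $\eta$ about $0$,
\begin{equation*}
\mathbb{P}(\eta \le -\Delta_{\mathrm{sem}}) = \mathbb{P}(\eta \ge \Delta_{\mathrm{sem}}),
\end{equation*}
so $2q \le \mathbb{P}(|\eta|\ge \Delta_{\mathrm{sem}}) \le 1$. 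Hence $q \le 1/2$ and $\lambda \ge \log 2 > 0$.

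This argument shows $\lambda$ depends only on the noise law and the margin, as claimed. For a sharper, explicitly margin-dependent rate, I would add a Chernoff bound under a sub-Gaussian or bounded-noise assumption. If $\eta$ is $\sigma$-sub-Gaussian, then $q \le \exp(-\Delta_{\mathrm{sem}}^2/(2\sigma^2))$, and so
\begin{equation*}
\lambda \ge \max\{\log 2,\ \Delta_{\mathrm{sem}}^2/(2\sigma^2)\}.
\end{equation*}
If $\eta$ has an atom at $0$ or a density near zero, symmetry actually gives $q<1/2$ strictly, but the $1/2$ bound already suffices.

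The main obstacle is justifying the factorization, not the algebra. In the actual decoder the steps are coupled: the context $\mathbf{x}_{s+1}$ depends on earlier predictions, so the events at different steps are not independent. The statement sidesteps this by assuming the differential noise $\eta_s$ is i.i.d. across steps, and I would state explicitly that the whole dependence on step $s$ enters only through $\eta_s$. Under that reading the product bound is immediate. A secondary subtlety is that "top-1 equals $w$" is only a sufficient consequence of beating $x_0^i$; I use only the necessary direction $\eta_s\le-\Delta_{\mathrm{sem}}$, which is all the upper bound requires.
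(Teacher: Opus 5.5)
Your proposal is correct and follows the paper's proof essentially step for step: you reduce a single-step reversal to $\eta_s \le -\Delta_{\mathrm{sem}}$, factor the $k$-step event into $q^k$ via the i.i.d.\ assumption, and use symmetry to get $q \le 1/2$, hence $\lambda = \ln(1/q) > 0$. Your non-strict inequalities (which handle ties) and your bound $q \le 1/2$ are slightly more careful than the paper's claim $q < 0.5$, which symmetry alone does not guarantee (for example, $\eta = \pm 2\Delta_{\mathrm{sem}}$ with equal probability gives $q = 1/2$), though either version suffices for $\lambda > 0$.
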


\textbf{ATC Management.}
To track the evolving sequence backbone, we maintain a fixed-size Anchor Token Cache (ATC) $\mathcal{I}_t$ of capacity $m$ that accumulates per-step candidates from $\mathcal{A}_t$ under a first-in-first-out (FIFO) policy:
\begin{equation*}
\mathcal{I}_t = \mathrm{FIFO}_m\!\left(\mathcal{I}_{t-1} \cup \{i: i \in \mathcal{A}_t\}\right)
\end{equation*}
where each entry records a position index $i$. This policy aligns with the approximate left-to-right confirmation order of dLLM unmasking~\citep{horvitz2025computeleftbehindrethinking}, so a FIFO cache retains the most recently stabilized region of the backbone.

\subsection{Anchor-Guided Generation: Combating Error Propagation}
\label{sec:mask_update}
A masked position carries no semantic content; its embedding is the generic $\mathbf{e}_{\text{mask}}$, leaving the subsequent forward pass vulnerable to pollution from erroneous pending neighbors~\citep{azangulov2025parallelsamplingmaskeddiffusion, bie2025llada2}. We counter this by injecting an anchor-derived prior into mask embedding before the forward pass. To this end, we summarize the trusted backbone by the anchor centroid at step $t$,
\begin{equation*}
\bar{c}_t = \frac{1}{|\mathcal{I}_t|}\sum_{j\in\mathcal{I}_t}\mathbf{e}^{j}_t
\end{equation*}
where each $\mathbf{e}^{j}_t$ is the input-layer word embedding of the anchor token committed at position $j$, so the centroid pools the input embeddings of all positions currently held in the ATC and serves as the source of anchor supervision for both the mask and pending updates.

Given this centroid, for each mask position $i \in \mathcal{M}_t$ we replace the uninformative mask embedding by a convex blend toward $\bar{c}_t$:
\begin{equation*}
\mathbf{e}^{i}_t = (1-\gamma^{i}_t)\,\mathbf{e}_{\text{mask}} + \gamma^{i}_t\,\bar{c}_t
\end{equation*}
The mask representation is therefore steered into the trusted subspace before any attention is computed.

To avoid over-correcting reliable predictions, we scale the update magnitude by the normalized prediction entropy of each mask position, taken from the previous step's logits at the same position so that no extra forward pass is required:
\begin{equation*}
\gamma_t^i = \alpha\,\bar{E}_t^i
\end{equation*}
where $\alpha\in[0,1]$ controls the guidance strength and $\bar{E}_t^i\in[0,1]$ is the min--max normalized entropy of position $i$ across all mask positions:
\begin{equation*}
\bar{E}_t^i =
\begin{cases}
\dfrac{E_t^i - E_{\min}}{E_{\max} - E_{\min}}, & \text{if } E_{\max} > E_{\min} \\
0, & \text{otherwise}
\end{cases}
\end{equation*}
Consequently, high-entropy masks receive stronger anchor supervision, while low-entropy masks remain nearly unchanged.

\textbf{Local stability of the embedding intervention.}
Convexity alone does not imply that an interpolated embedding exactly follows the training distribution; our claim is therefore local rather than that the intervention is strictly in-distribution. Denoting the post-intervention embedding by $\mathbf{e}^{i\prime}_t$, the update can be written as
\begin{equation*}
\mathbf{e}^{i\prime}_t = \mathbf{e}_{\text{mask}} + \boldsymbol{\delta}_t^i,
\qquad
\boldsymbol{\delta}_t^i = \gamma_t^i(\bar{c}_t-\mathbf{e}_{\text{mask}}).
\end{equation*}
Since $0\leq\gamma_t^i=\alpha\bar{E}_t^i\leq\alpha$ and $\bar{c}_t$ lies in the convex hull of learned token embeddings, $\mathbf{e}^{i\prime}_t$ stays on the segment between $\mathbf{e}_{\text{mask}}$ and $\bar{c}_t$, with
\begin{equation*}
\|\boldsymbol{\delta}_t^i\|_2
\leq \alpha\|\bar{c}_t-\mathbf{e}_{\text{mask}}\|_2.
\end{equation*}
If the position-$i$ logit map $f_i$ is locally $L_i$-Lipschitz along this segment, then $\|f_i(\mathbf{e}^{i\prime}_t)-f_i(\mathbf{e}_{\text{mask}})\|_2\leq L_i\|\boldsymbol{\delta}_t^i\|_2$. Thus, entropy modulation makes the intervention negligible for low-entropy positions and larger but still bounded for uncertain positions. This is a local stability argument, not a global guarantee of correctness. Empirically, the broad stable region in Figure~\ref{fig:ab_alphabeta}, the module and magnitude-form ablations in Tables~\ref{tab:ablationresult} and~\ref{tab:perturbation_type}, and the attention redistribution in Figure~\ref{fig:combined_analysis} support the prediction quality of the bounded update; deployed values are restricted to $\alpha\in[0.05,0.2]$ (Appendix~\ref{sec:experiments_details}).

\textbf{Implicit attention rectification.} Although the above embedding update modifies embeddings rather than attention scores, the induced query shift biases subsequent attention toward anchor-aligned context. We provide an intuition for this in Appendix~\ref{appen_pro2}.

\subsection{Anchor-Perturbed Verification: Breaking Local Error Reinforcement}
\label{sec:pending_update}
Let the \emph{pending set} be the threshold-passed positions that have not been anchored: $\mathcal{P}_t = \mathcal{T}_t \setminus \mathcal{A}_t$.
A pending token at position $i \in \mathcal{P}_t$ already carries a committed embedding $\mathbf{e}^{i}_t$ whose top-1 prediction we wish to verify. However, this semantic direction may be sustained by \emph{local error reinforcement} with neighboring errors rather than by the global trajectory. Because direct overwriting, as in the mask update, would be destructive, we instead apply a non-destructive \emph{probe} that reuses the anchor centroid $\bar{c}_t$ defined above.

To keep the probe orthogonal to the committed direction, we extract the component of $\bar{c}_t$ orthogonal to $\mathbf{e}^{i}_t$:
\begin{equation*}
\mathbf{d}^{i}_t = \bar{c}_t - \frac{\langle \bar{c}_t,\,\mathbf{e}^{i}_t\rangle}
{\|\mathbf{e}^{i}_t\|_2^2}
\,\mathbf{e}^{i}_t 
\end{equation*}
Since $\mathbf{d}^{i}_t$ is the orthogonal residual of $\bar{c}_t$ with respect to $\mathbf{e}^{i}_t$, its norm is bounded by $\|\bar{c}_t\|_2$. Thus, the probe remains on the same scale regardless of how pending tokens change.

We then add this probe to the committed embedding with a constant coefficient $\beta$, yielding the perturbed embedding
\begin{equation*}
\mathbf{e}_t^i = \mathbf{e}_t^i + \beta\,\mathbf{d}^{i}_t
\end{equation*}
Anchor-aligned tokens induce small residuals and are only weakly perturbed, whereas consensus-sustained errors produce larger orthogonal residuals and receive stronger probes. The perturbed embeddings then enter the next forward pass, and we adopt the Top-1 Stability criterion from Section~\ref{sec:prelim_revocable} as our verification rule: position $i$ fails and is remasked whenever its post-perturbation top-1 prediction differs from the committed draft $\breve{x}_t^i$.

\textbf{Visible-position logits as a fragility signal.}
The denoising objectives of LLaDA and Dream directly supervise masked positions, not already visible positions. We therefore do not interpret visible-position logits as calibrated token predictions. Instead, they provide a counterfactual stability signal: does a committed draft remain supported after a small, structured probe toward the trusted anchor context? By construction,
\begin{equation*}
\langle\mathbf{d}_t^i,\mathbf{e}_t^i\rangle=0,
\qquad
\|\beta\mathbf{d}_t^i\|_2\leq\beta\|\bar{c}_t\|_2,
\end{equation*}
so the probe is bounded and does not directly reinforce the committed-token direction. Let $m_i(\mathbf{E})$ be the committed token's logit margin under the input embedding sequence $\mathbf{E}$, and let $\mathbf{D}_i$ contain $\mathbf{d}_t^i$ only at position $i$. If each relevant logit is locally $L_i$-Lipschitz, then
\begin{equation*}
\big|m_i(\mathbf{E}+\beta\mathbf{D}_i)-m_i(\mathbf{E})\big|
\leq 2L_i\beta\|\mathbf{d}_t^i\|_2.
\end{equation*}
A top-1 flip under this bounded probe therefore identifies a draft with a small local robustness margin. This is deliberately a one-sided fragility test: a flip indicates instability, whereas stability alone is not claimed to certify semantic correctness. Its practical utility is supported by the module and direction-form ablations in Tables~\ref{tab:ablationresult} and~\ref{tab:perturbation_type}; Table~\ref{tab:perturb_metrics} further shows that the probe sharply reduces long contiguous error spans.

\begin{table*}[h]
\centering
\renewcommand{\arraystretch}{1}
\resizebox{1.0 \linewidth}{!}{
\begin{tabular}{lc l lrc lrc lrc lrc}
\toprule[1pt]
\multirow{2}{*}{\textbf{Model}} & \multirow{2}{*}{\textbf{Len}} & \multirow{2}{*}{\textbf{Method}} & \multicolumn{3}{c}{\textbf{HumanEval}} & \multicolumn{3}{c}{\textbf{MBPP}} & \multicolumn{3}{c}{\textbf{GSM8K}} & \multicolumn{3}{c}{\textbf{MATH500}} \\
\cmidrule(lr){4-6} \cmidrule(lr){7-9} \cmidrule(lr){10-12} \cmidrule(lr){13-15}
& & & Acc $\uparrow$ & Steps $\downarrow$ & Speed $\uparrow$ & Acc $\uparrow$ & Steps $\downarrow$ & Speed $\uparrow$ & Acc $\uparrow$ & Steps $\downarrow$ & Speed $\uparrow$ & Acc $\uparrow$ & Steps $\downarrow$ & Speed $\uparrow$ \\
\hline
\multirow{8}{*}{\textbf{LLaDA-Ins-8B}} & \multirow{4}{*}{256} & baseline & \textcolor{DarkGrey}{38.7} & \textcolor{DarkGrey}{256.0} & \textcolor{DarkGrey}{1.0×} & \textcolor{DarkGrey}{36.8} & \textcolor{DarkGrey}{256.0} & \textcolor{DarkGrey}{1.0×} & \textcolor{DarkGrey}{77.4} & \textcolor{DarkGrey}{256.0} & \textcolor{DarkGrey}{1.0×} & \textcolor{DarkGrey}{33.8} & \textcolor{DarkGrey}{256.0} & \textcolor{DarkGrey}{1.0×} \\
& & WINO & 37.5 & 75.8 & 2.3× & 36.2 & 91.1 & 1.9× & 77.3 & \textbf{53.7} & 2.9× & 34.2 & 78.6 & 2.0×\\
& & Saber & 39.4 & 101.2 & 2.4× & 37.8 & 115.4 & 2.1× & 77.8 & 120.5 & 1.9× & 34.8 & 135.2 & 1.8× \\
& & Ours & \textbf{41.5}$_{\textcolor{LightGreen}{+2.8}}$ & \textbf{75.4} & \textbf{3.1×} & \textbf{38.6}$_{\textcolor{LightGreen}{+1.8}}$ & \textbf{88.3} & \textbf{2.5×} & \textbf{79.5}$_{\textcolor{LightGreen}{+2.1}}$ & 56.9 & \textbf{3.2×} & \textbf{35.2}$_{\textcolor{LightGreen}{+1.4}}$ & \textbf{74.8} & \textbf{2.9×} \\
\cmidrule(lr){3-15}

& \multirow{4}{*}{512} & baseline & \textcolor{DarkGrey}{43.9} & \textcolor{DarkGrey}{512.0} & \textcolor{DarkGrey}{1.0×} & \textcolor{DarkGrey}{38.2} & \textcolor{DarkGrey}{512.0} & \textcolor{DarkGrey}{1.0×} & \textcolor{DarkGrey}{80.3} & \textcolor{DarkGrey}{512.0} & \textcolor{DarkGrey}{1.0×} & \textcolor{DarkGrey}{37.8} & \textcolor{DarkGrey}{512.0} & \textcolor{DarkGrey}{1.0×} \\
& & WINO & 43.9 & 133.4 & 2.8× & 37.4 & 133.3 & 2.4× & 78.7 & \textbf{83.1} & 3.6× & 35.6 & 120.6 & 2.6×\\
& & Saber & 44.5 & 198.3 & 2.3× & 37.8 & 188.9 & 2.6× & 79.7 & 141.7 & 3.2× & 37.4 & 192.5 & 2.5× \\
& & Ours & \textbf{48.8}$_{\textcolor{LightGreen}{+4.9}}$ & \textbf{127.8} & \textbf{3.6×} & \textbf{39.4}$_{\textcolor{LightGreen}{+1.2}}$ & \textbf{110.4} & \textbf{2.9×} & \textbf{81.1}$_{\textcolor{LightGreen}{+0.8}}$ & 88.5 & \textbf{4.3×} & \textbf{38.8}$_{\textcolor{LightGreen}{+1.0}}$ & \textbf{119.3} & \textbf{3.8×}\\

\hline
\multirow{8}{*}{\textbf{Dream-Ins-7B}} & \multirow{4}{*}{256} & baseline & \textcolor{DarkGrey}{54.9} & \textcolor{DarkGrey}{256.0} & \textcolor{DarkGrey}{1.0×} & \textcolor{DarkGrey}{57.4} & \textcolor{DarkGrey}{256.0} & \textcolor{DarkGrey}{1.0×} & \textcolor{DarkGrey}{80.4} & \textcolor{DarkGrey}{256.0} & \textcolor{DarkGrey}{1.0×} & \textcolor{DarkGrey}{37.8} & \textcolor{DarkGrey}{256.0} & \textcolor{DarkGrey}{1.0×} \\
& & WINO & 51.3 & 83.8 & 2.4× & 56.8 & 64.7 & 3.6× & 78.6 & 72.7 & 3.4× & 38.4 & 126.6 & 1.6×\\
& & Saber & 52.5 & 165.1 & 1.5× & 57.6 & 191.0 & 1.3× & 79.9 & 185.6 & 1.3× & 38.8 & 179.4 & 1.4× \\
& & Ours & \textbf{56.1}$_{\textcolor{LightGreen}{+1.2}}$ & \textbf{82.6} & \textbf{3.1×} & \textbf{58.8}$_{\textcolor{LightGreen}{+1.4}}$ & \textbf{62.4} & \textbf{4.1×} & \textbf{80.7}$_{\textcolor{LightGreen}{+0.3}}$ & \textbf{65.6} & \textbf{3.9×} & \textbf{41.6}$_{\textcolor{LightGreen}{+3.8}}$ & \textbf{78.1} & \textbf{3.5×} \\
\cmidrule(lr){3-15}

& \multirow{4}{*}{512} & baseline & \textcolor{DarkGrey}{56.1} & \textcolor{DarkGrey}{512.0} & \textcolor{DarkGrey}{1.0×} & \textcolor{DarkGrey}{56.2} & \textcolor{DarkGrey}{512.0} & \textcolor{DarkGrey}{1.0×} & \textcolor{DarkGrey}{80.2} & \textcolor{DarkGrey}{512.0} & \textcolor{DarkGrey}{1.0×} & \textcolor{DarkGrey}{38.6} & \textcolor{DarkGrey}{512.0} & \textcolor{DarkGrey}{1.0×} \\
& & WINO & 51.3 & 102.5 & 3.9× & 55.4 & 86.1 & 5.2× & 79.5 & 92.7 & 5.5× & 39.8 & 139.9 & 2.8×\\
& & Saber & 53.1 & 264.6 & 1.9× & 55.8 & 388.2 & 1.3× & 79.9 & 314.5 & 1.6× & 41.4 & 250.7 & 2.0× \\
& & Ours & \textbf{56.7}$_{\textcolor{LightGreen}{+0.6}}$ & \textbf{85.3} & \textbf{5.5×} & \textbf{57.2}$_{\textcolor{LightGreen}{+1.0}}$ & \textbf{74.5} & \textbf{7.2×} & \textbf{81.1}$_{\textcolor{LightGreen}{+0.9}}$ & \textbf{78.2} & \textbf{6.8×} & \textbf{45.0}$_{\textcolor{LightGreen}{+6.4}}$ & \textbf{94.8} & \textbf{5.4×} \\
\bottomrule[1pt]
\end{tabular}
}
\caption{Performance of different instruct (ins) models and methods across benchmarks.
Speed denotes relative runtime (baseline = 1.0×).
Baseline results are shown in \textcolor{DarkGrey}{grey}, and \sysname improvements in \textcolor{LightGreen}{green}.}
\label{tab:main_result}
\end{table*}

\begin{figure*}[ht]
    \centering
    \begin{subfigure}[b]{0.23\textwidth}
        \centering
        \includegraphics[width=\textwidth]{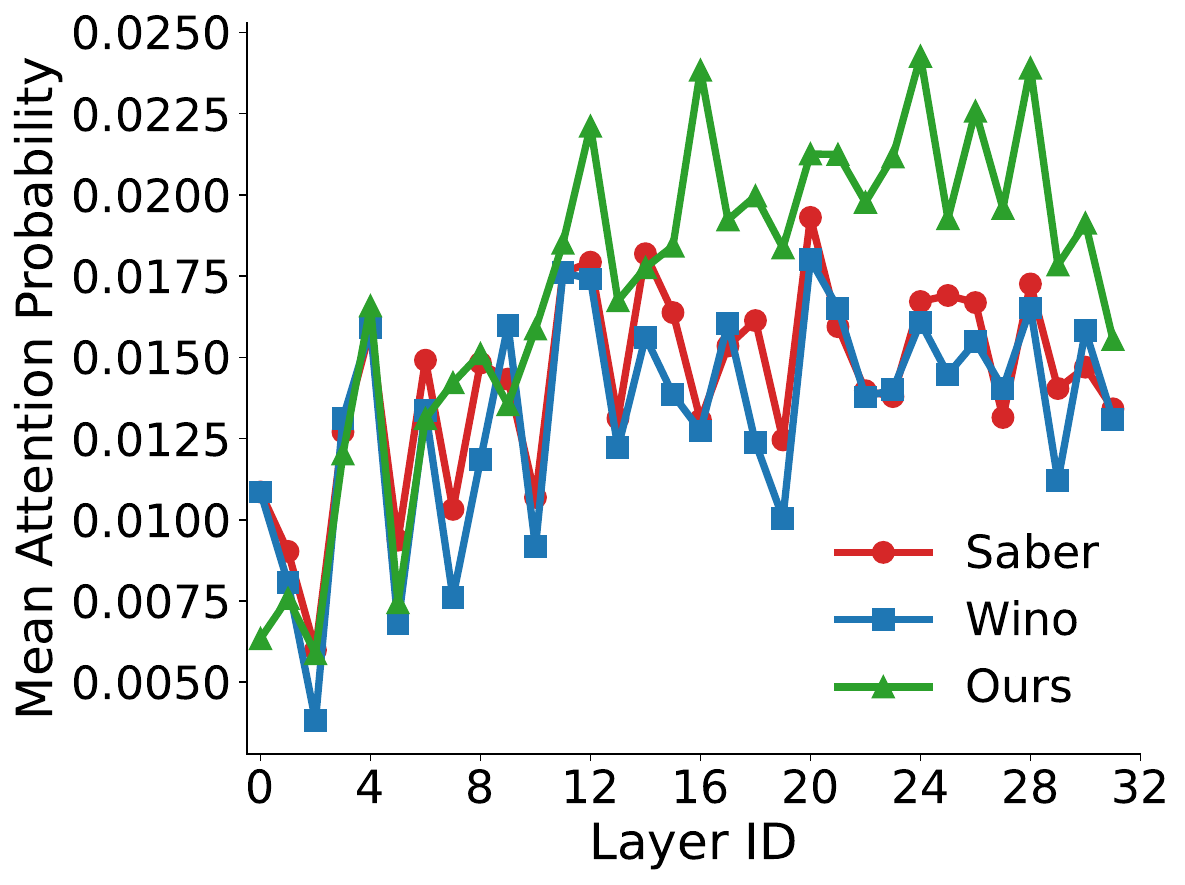}
        \caption{Mask-to-anchor attention: Anchor-Guided Generation increases attention from mask to anchor tokens.}
        \label{fig:effect_2}
    \end{subfigure}
    \hfill
    \begin{subfigure}[b]{0.23\textwidth}
        \centering
        \includegraphics[width=\textwidth]{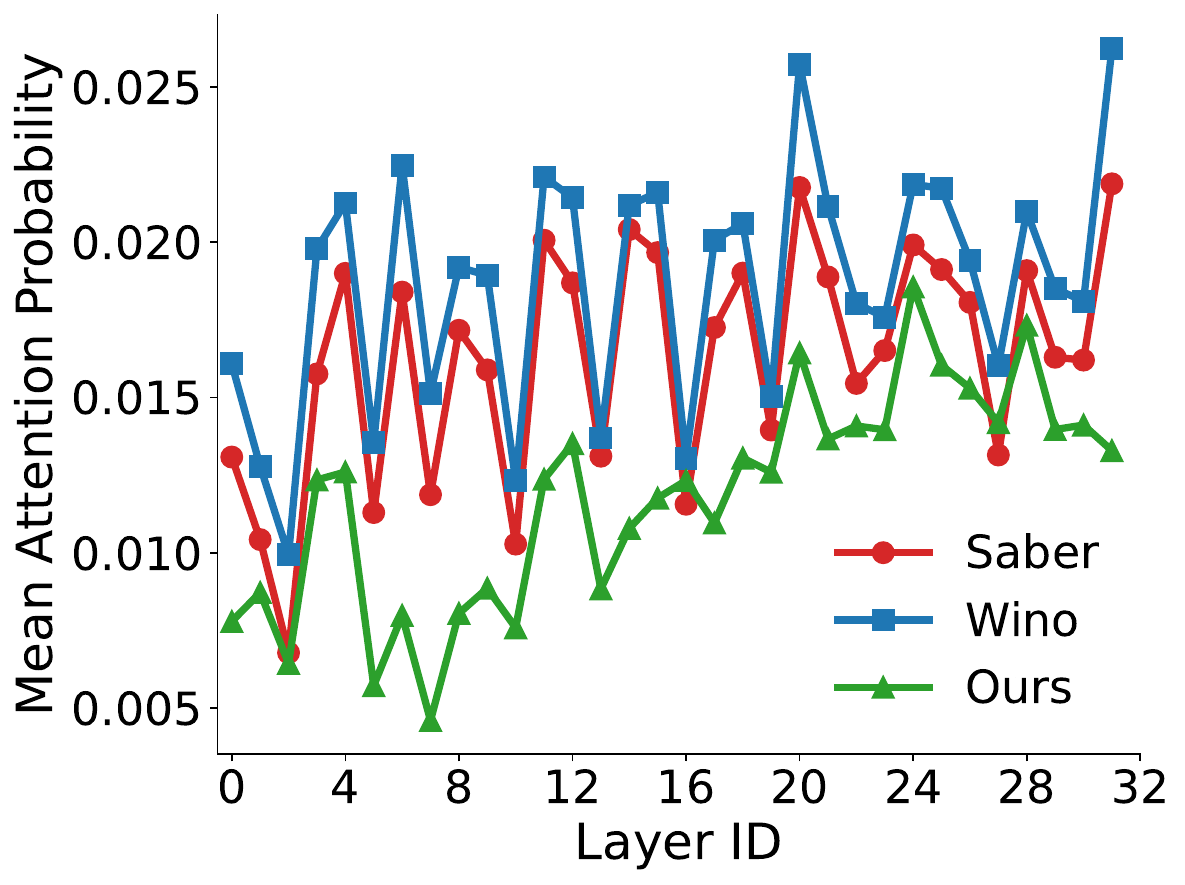}
        \caption{Mask-to-pending attention: Anchor-Guided Generation suppresses attention to uncertain pending tokens.}
        \label{fig:effect_1}
    \end{subfigure}
    \hfill
    \begin{subfigure}[b]{0.23\textwidth}
        \centering
        \includegraphics[width=\textwidth]{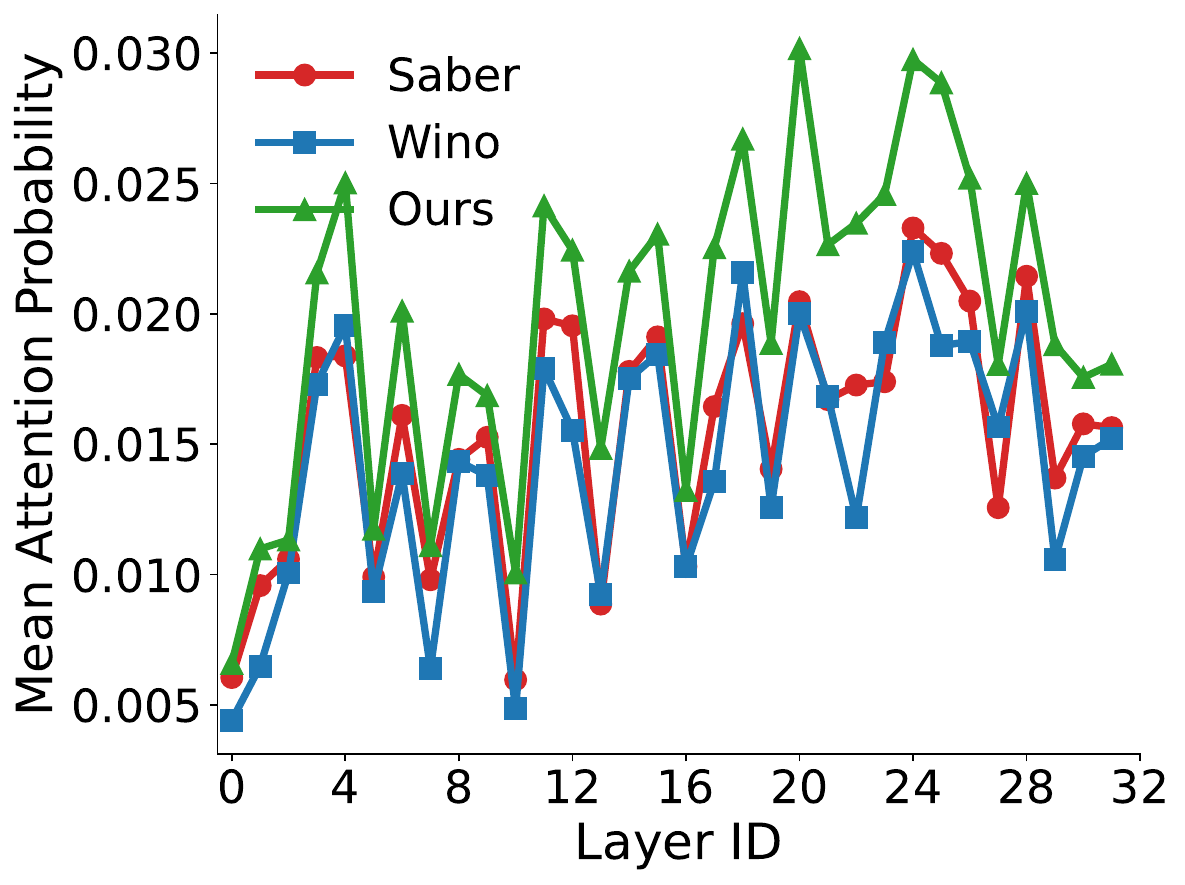}
        \caption{Pending-to-anchor attention: Anchor-Perturbed Verification strengthens alignment with the backbone.}
        \label{fig:effect_4}
    \end{subfigure}
    \hfill
    \begin{subfigure}[b]{0.23\textwidth}
        \centering
        \includegraphics[width=\textwidth]{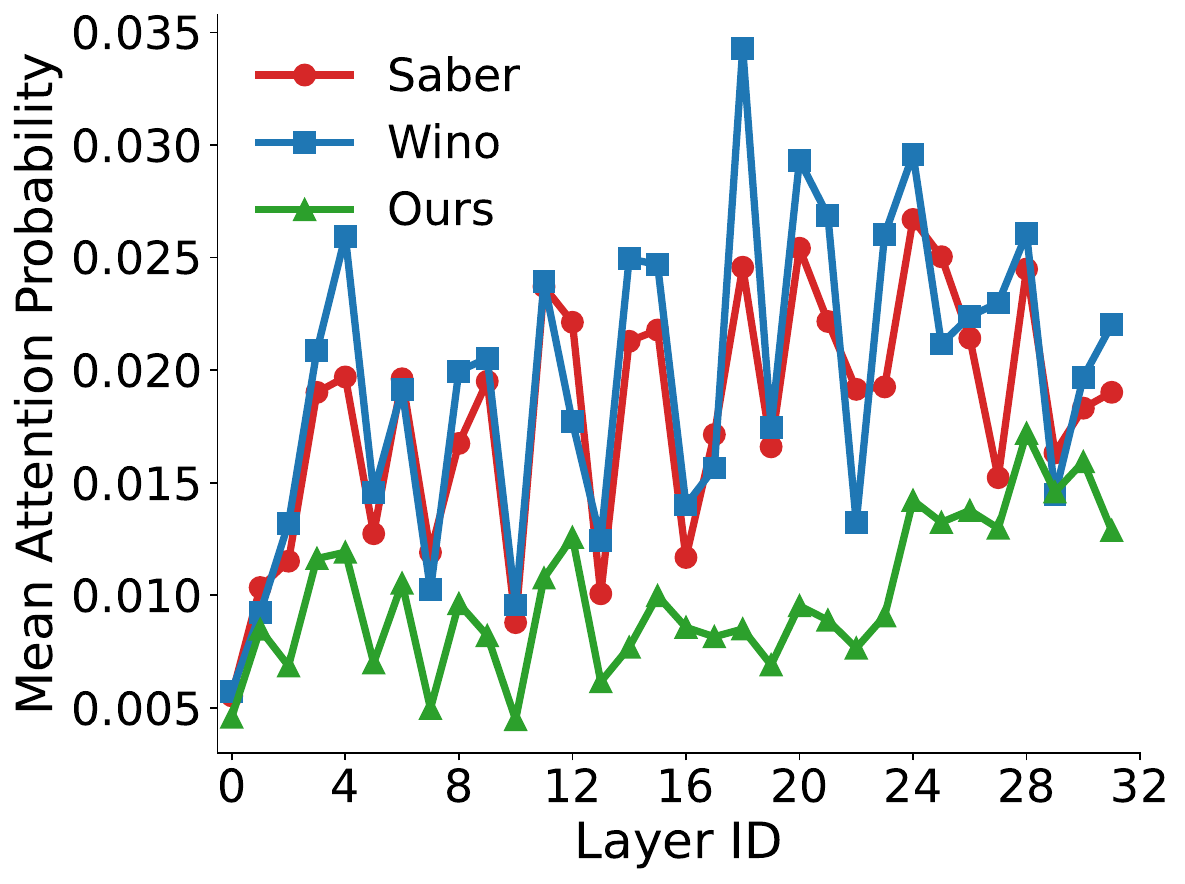}
        \caption{Pending-to-pending attention: Anchor-Perturbed Verification disrupts mutual reinforcement.}
        \label{fig:effect_3}
    \end{subfigure}
    \caption{\textit{Layer-wise attention redistribution induced by \sysname{}.} (a)--(b): \textbf{Anchor-Guided Generation} implicitly rectifies attention; mask tokens reallocate weight from pending neighbors to anchor tokens. (c)--(d): \textbf{Anchor-Perturbed Verification} induces a complementary effect; orthogonal probing increases the attention from pending tokens to anchors while dampening mutual reinforcement among pending tokens.}
    \label{fig:combined_analysis}
\end{figure*}

\section{Experiments}
\subsection{Experimental Setup}
\textbf{Implementation Details.}
We evaluate on two representative instruct dLLMs, LLaDA-Ins-8B~\citep{nie2025largelanguagediffusionmodels} and Dream-Ins-7B~\citep{ye2025dream}. Results on their base variants and on the RL-tuned LLaDA-1.5-8B~\citep{zhu2025llada15variancereducedpreference} are reported in Appendix~\ref{sec:additional_results}. For these comparisons, we adopt the Semi-AR sampling strategy from LLaDA with block size 32. Appendix~\ref{app:block_models} additionally evaluates native block diffusion models without replacing their decoders. Hyperparameter sensitivity is examined in Section~\ref{sec:ablation}, and additional configuration details are provided in Appendix~\ref{sec:experiments_details}.

\textbf{Datasets.}
We use four benchmarks covering two task families. GSM8K~\citep{cobbe2021trainingverifierssolvemath} and MATH500~\citep{lightman2024lets} target mathematical reasoning, while MBPP~\citep{austin2021programsynthesislargelanguage} and HumanEval~\citep{chen2021evaluatinglargelanguagemodels} target code generation. Detailed dataset descriptions are deferred to Appendix~\ref{sec:experiments_details}.

\textbf{Metrics.}
We report three metrics. For effectiveness, \textit{Accuracy} corresponds to final-answer accuracy on the math benchmarks and pass@1 on the code benchmarks. For efficiency, \textit{Steps} is the number of forward passes required to generate a fixed-length sequence, which provides a model-agnostic measure of decoding cost, and \textit{Speedup} is the wall-clock time reduction relative to the standard decoder, which we use as the primary efficiency metric.

\textbf{Baselines.}
We compare \sysname{} against the standard decoder and two state-of-the-art revocable-decoding methods, WINO~\citep{hong2025wide} and Saber~\citep{dong2025saber}. To ensure a fair comparison, both baselines are re-implemented from their official repositories within the LM-Eval~\citep{eval-harness} framework. We match the common decoding settings---including the backbone, output length, block size, and temperature---while retaining each method's native control parameters. Appendix~\ref{app:threshold_only} further isolates the anchor mechanisms from generic parallel drafting through a threshold-only control based on Fast-dLLM~\citep{wu2025fastdllmtrainingfreeaccelerationdiffusion}.

\subsection{Main Results}
\label{sec:main}
\textbf{Accuracy Performance.}
Table~\ref{tab:main_result} summarizes accuracy across all configurations. \sysname{} achieves the best average accuracy on every benchmark and backbone, with representative gains of $+4.9\%$ on LLaDA-Ins-8B for HumanEval at length 512 and $+6.4\%$ on Dream-Ins-7B for MATH500 at length 512. Existing baselines occasionally fall below the standard decoder, as in WINO on Dream-Ins-7B HumanEval, which reflects the cost of verifying against a context of mixed quality. The advantage of \sysname{} also amplifies with sequence length, growing from $+3.8\%$ to $+6.4\%$ on Dream-Ins-7B MATH500 as the length doubles, consistent with the ATC providing a stronger global skeleton over longer horizons. Appendix~\ref{app:multiseed} reports five-run results at temperature $0.2$ with 95\% confidence intervals; \sysname{} has the highest mean accuracy in all settings, although overlapping intervals mean that not every individual gain is statistically significant.

\textbf{Efficiency and Decoding Speed.}
\sysname{} achieves wall-clock speedups of $2.5\times$ to $7.2\times$ across all configurations. Anchor-Guided Generation produces higher-quality drafts that require fewer subsequent corrections, which directly translates accuracy gains into a smaller number of decoding steps. The largest speedup appears on Dream-Ins-7B for MBPP at length 512, where \sysname{} reaches $7.2\times$, compared to $5.2\times$ for WINO and $1.3\times$ for Saber. The efficiency advantage also scales with sequence length, growing from $4.1\times$ to $7.2\times$ on the same task.

\subsection{Empirical Evidence: Attention Redistribution}
To validate the implicit attention rectification claimed in Section~\ref{sec:mask_update}, the per-layer attention matrices are recorded and the mean attention probability between tokens of different roles is reported. Under Anchor-Guided Generation, mask tokens consistently allocate higher attention to anchors (Figure~\ref{fig:effect_2}) and lower attention to pending tokens (Figure~\ref{fig:effect_1}), with the effect most pronounced in deeper layers where semantic composition is believed to occur. An embedding-space injection alone is therefore enough to achieve attention rectification, without modifying any attention score and while preserving FlashAttention. A similar redistribution emerges under Anchor-Perturbed Verification (Figures~\ref{fig:effect_4}--\ref{fig:effect_3}): the orthogonal perturbation raises the attention from pending tokens to anchors while lowering the attention among pending tokens themselves, which complements the stability-probing rule of Section~\ref{sec:pending_update} by weakening local error reinforcement.

\subsection{Ablation Study}
\label{sec:ablation}

\begin{table*}[!ht]
\centering
\renewcommand{\arraystretch}{1.0}
\resizebox{1\linewidth}{!}{
\begin{tabular}{c l lrc lrc lrc lrc}
\toprule[1pt]
\multirow{2}{*}{\textbf{Length}} 
& \multirow{2}{*}{\textbf{Method}}
& \multicolumn{3}{c}{HumanEval}
& \multicolumn{3}{c}{MBPP}
& \multicolumn{3}{c}{GSM8K}
& \multicolumn{3}{c}{MATH500} \\

\cmidrule(lr){3-5} \cmidrule(lr){6-8} \cmidrule(lr){9-11} \cmidrule(lr){12-14}
& 
& Acc $\uparrow$ & Steps $\downarrow$ & Speed $\uparrow$
& Acc $\uparrow$ & Steps $\downarrow$ & Speed $\uparrow$
& Acc $\uparrow$ & Steps $\downarrow$ & Speed $\uparrow$
& Acc $\uparrow$ & Steps $\downarrow$ & Speed  $\uparrow$ \\

\hline
\multirow{4.5}{*}{256}
& baseline & \textcolor{DarkGrey}{38.7} & \textcolor{DarkGrey}{256.0} & \textcolor{DarkGrey}{1.0×} & \textcolor{DarkGrey}{36.8} & \textcolor{DarkGrey}{256.0} & \textcolor{DarkGrey}{1.0×} & \textcolor{DarkGrey}{77.4} & \textcolor{DarkGrey}{256.0} & \textcolor{DarkGrey}{1.0×} & \textcolor{DarkGrey}{33.8} & \textcolor{DarkGrey}{256.0} & \textcolor{DarkGrey}{1.0×} \\

& Ours & 41.5$_{\textcolor{LightGreen}{+2.8}}$ & 75.4 & 3.1× & 38.6$_{\textcolor{LightGreen}{+1.8}}$ & 88.3 & 2.5× & 79.5$_{\textcolor{LightGreen}{+2.1}}$ & 56.9 & 3.2× & 35.2$_{\textcolor{LightGreen}{+1.4}}$ & 74.8 & 2.9× \\

\cmidrule(lr){2-14}
& w/o Anchor-Guided Generation
& 39.0$_{\textcolor{red}{-2.5}}$ & 86.2 & 2.8× & 37.6$_{\textcolor{red}{-1.0}}$ & 102.5 & 2.3× & 78.4$_{\textcolor{red}{-1.1}}$ & 78.4 & 3.0× & 34.0$_{\textcolor{red}{-1.2}}$ & 89.6 & 2.7× \\

& w/o Anchor-Perturbed Verification
& 40.3$_{\textcolor{red}{-1.2}}$ & 68.9 & 3.5× & 38.0$_{\textcolor{red}{-0.6}}$ & 81.3 & 2.9× & 79.1$_{\textcolor{red}{-0.4}}$ & 67.1 & 3.5× & 33.6$_{\textcolor{red}{-1.6}}$ & 75.6 & 3.2× \\

\hline
\multirow{4.5}{*}{512}
& baseline & \textcolor{DarkGrey}{43.9} & \textcolor{DarkGrey}{512.0} & \textcolor{DarkGrey}{1.0×} & \textcolor{DarkGrey}{38.2} & \textcolor{DarkGrey}{512.0} & \textcolor{DarkGrey}{1.0×} & \textcolor{DarkGrey}{80.3} & \textcolor{DarkGrey}{512.0} & \textcolor{DarkGrey}{1.0×} & \textcolor{DarkGrey}{37.8} & \textcolor{DarkGrey}{512.0} & \textcolor{DarkGrey}{1.0×} \\

& Ours & 48.8$_{\textcolor{LightGreen}{+4.9}}$ & 127.8 & 3.6× & 39.4$_{\textcolor{LightGreen}{+1.2}}$ & 110.4 & 2.9× & 81.1$_{\textcolor{LightGreen}{+0.8}}$ & 88.5 & 4.3× & 38.8$_{\textcolor{LightGreen}{+1.0}}$ & 119.3 & 3.8×\\

\cmidrule(lr){2-14}
& w/o Anchor-Guided Generation
& 47.5$_{\textcolor{red}{-1.3}}$ & 148.2 & 3.3× & 38.4$_{\textcolor{red}{-1.0}}$ & 144.1 & 2.7× & 80.8$_{\textcolor{red}{-0.3}}$ & 104.5 & 4.1× & 38.6$_{\textcolor{red}{-0.2}}$ & 134.1 & 3.6×\\

& w/o Anchor-Perturbed Verification
& 46.9$_{\textcolor{red}{-1.9}}$ & 122.3 & 4.0× & 38.6$_{\textcolor{red}{-0.8}}$ & 121.7 & 3.2× & 80.4$_{\textcolor{red}{-0.7}}$ & 93.2 & 4.6× & 37.0$_{\textcolor{red}{-1.8}}$ & 117.8 & 4.1×\\

\bottomrule[1pt]
\end{tabular}
}
\caption{Module-level ablation on LLaDA-Ins-8B. ``w/o Anchor-Guided Generation'' disables the mask-side update ($\gamma^i_t=0$ on the mask role); ``w/o Anchor-Perturbed Verification'' disables the pending-side update ($\beta=0$ on the pending role). \textcolor{red}{Red} and \textcolor{LightGreen}{green} subscripts show the accuracy change relative to full \sysname{}. Design-level ablations (magnitude form on mask, direction form on pending) are reported in Table~\ref{tab:perturbation_type}.}
\label{tab:ablationresult}
\end{table*}

\begin{figure*}[ht]
    \centering
    \begin{subfigure}[b]{0.24\textwidth}
        \centering
        \includegraphics[width=\textwidth]{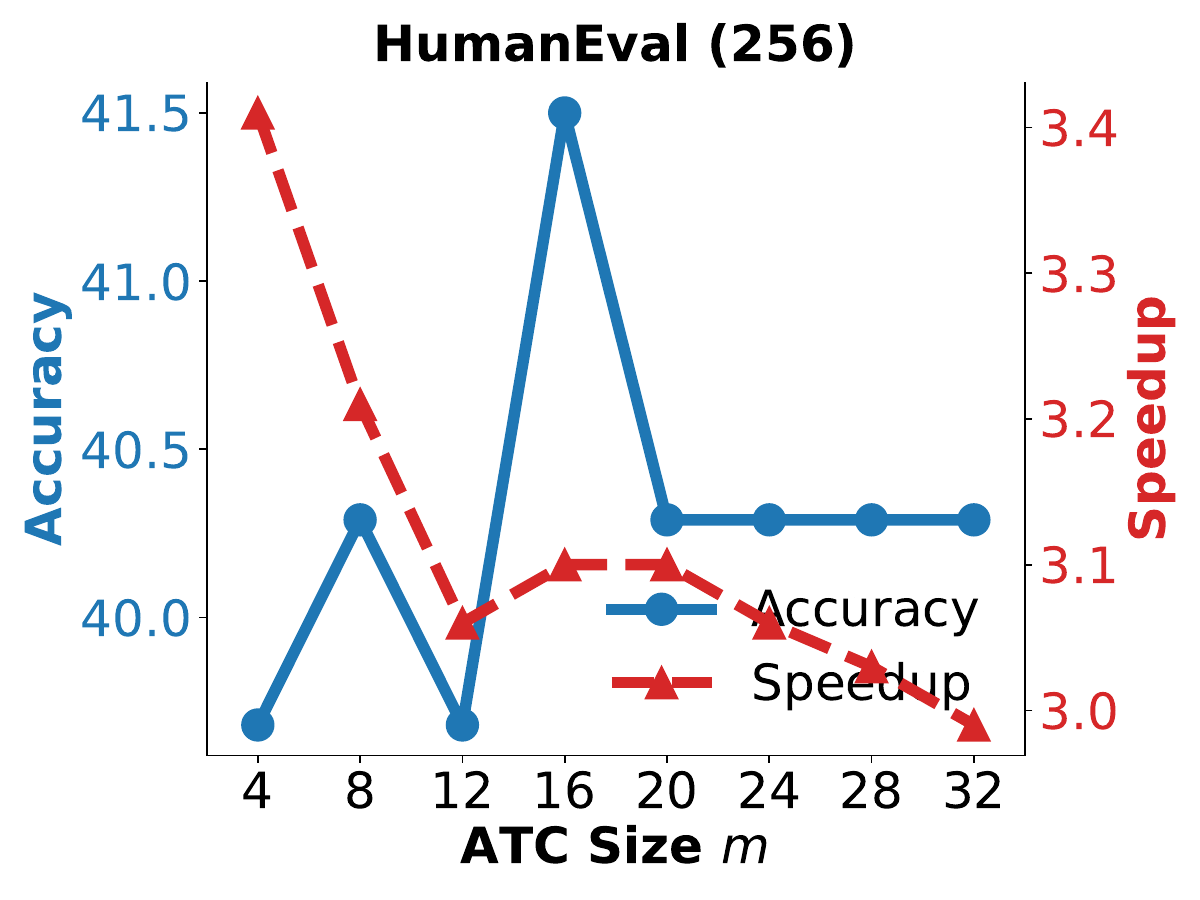}
        \label{fig:ATC_humaneval_256}
    \end{subfigure}
    \hfill
    \begin{subfigure}[b]{0.24\textwidth}
        \centering
        \includegraphics[width=\textwidth]{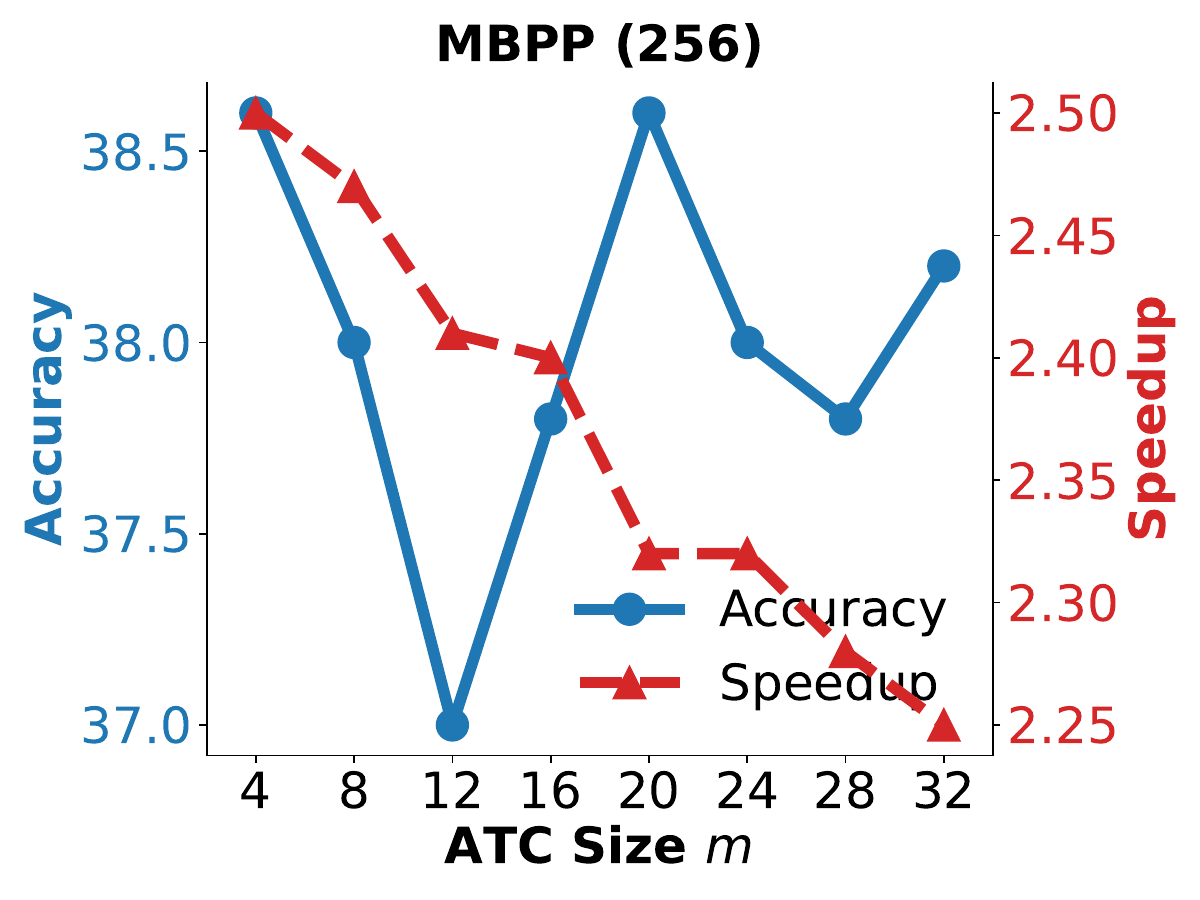}
        \label{fig:ATC_mbpp_256}
    \end{subfigure}
    \hfill
    \begin{subfigure}[b]{0.24\textwidth}
        \centering
        \includegraphics[width=\textwidth]{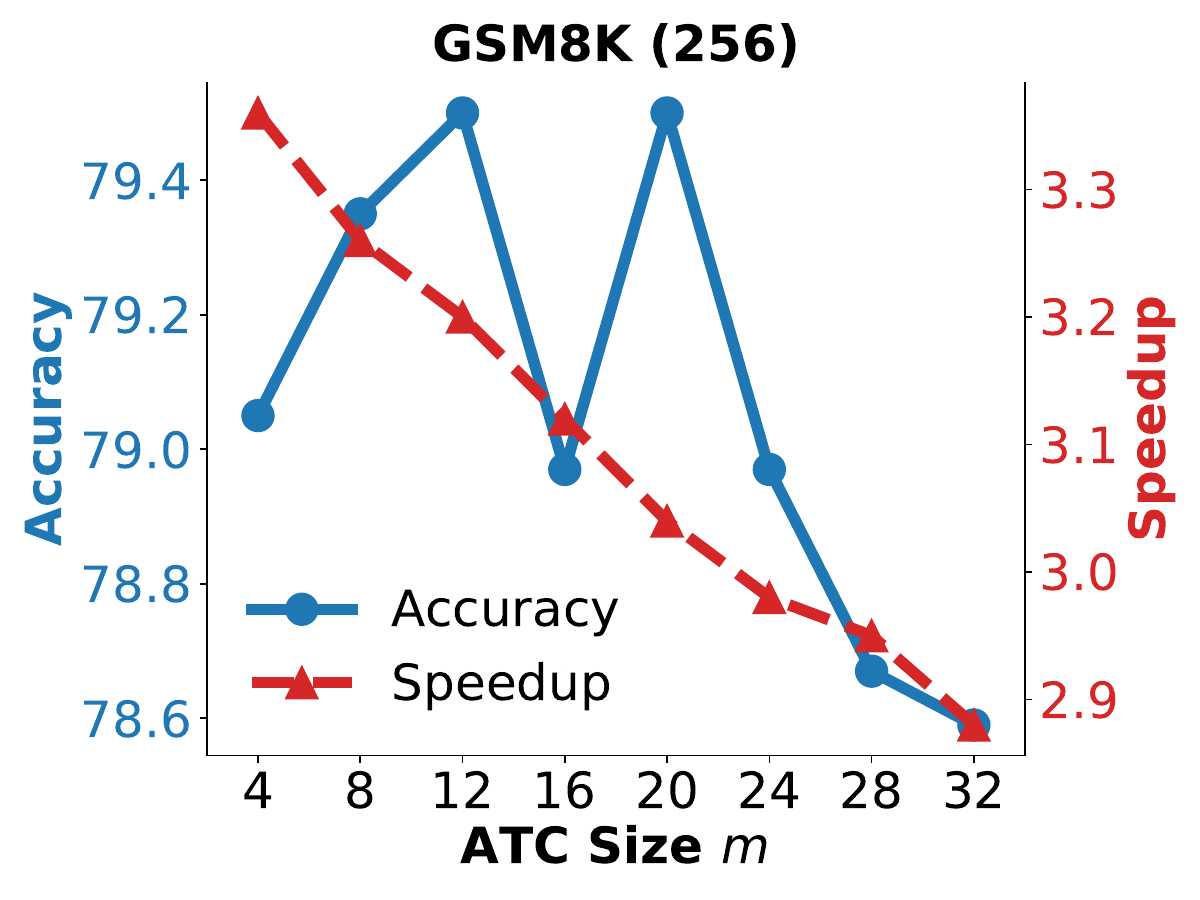}
        \label{fig:ATC_gsm8k_256}
    \end{subfigure}
    \hfill
    \begin{subfigure}[b]{0.24\textwidth}
        \centering
        \includegraphics[width=\textwidth]{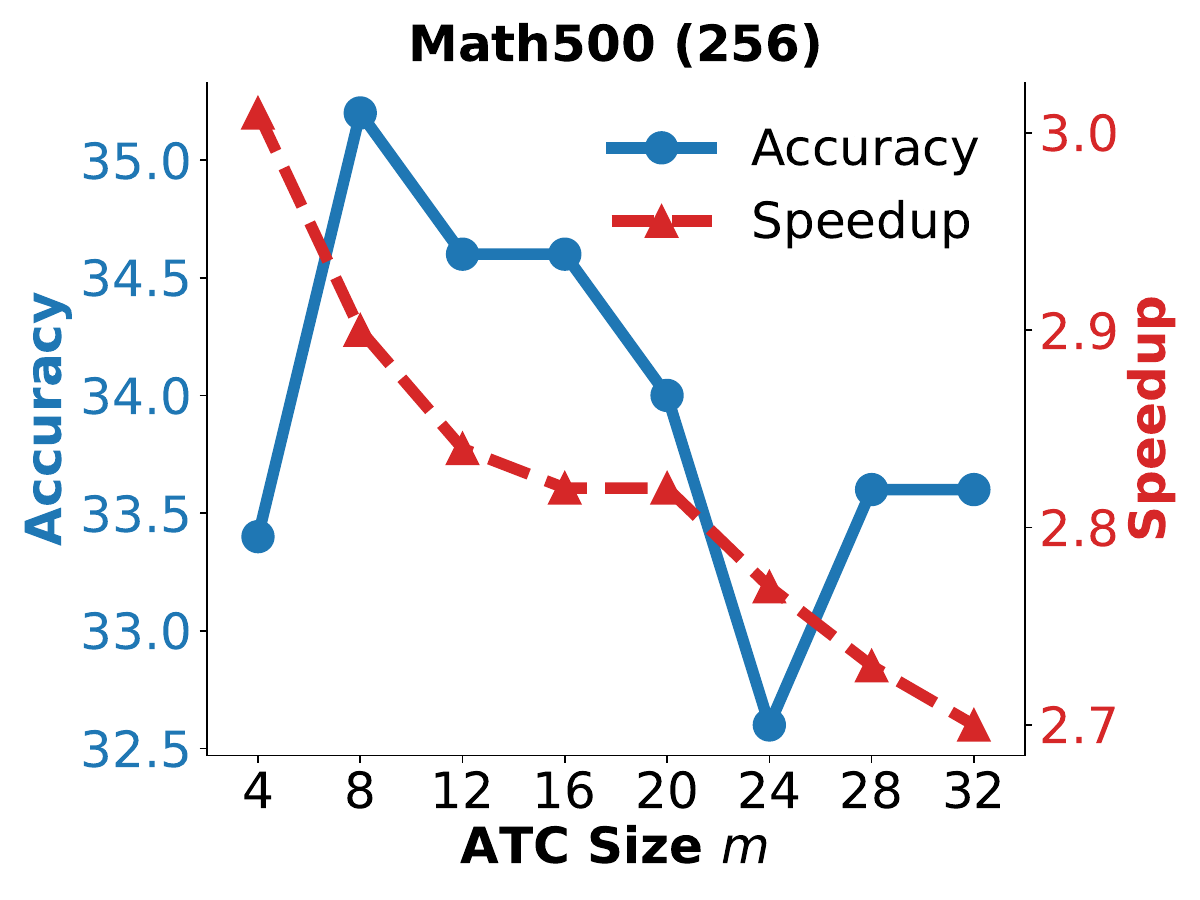}
        \label{fig:ATC_math500_256}
    \end{subfigure}

    \begin{subfigure}[b]{0.24\textwidth}
        \centering
        \includegraphics[width=\textwidth]{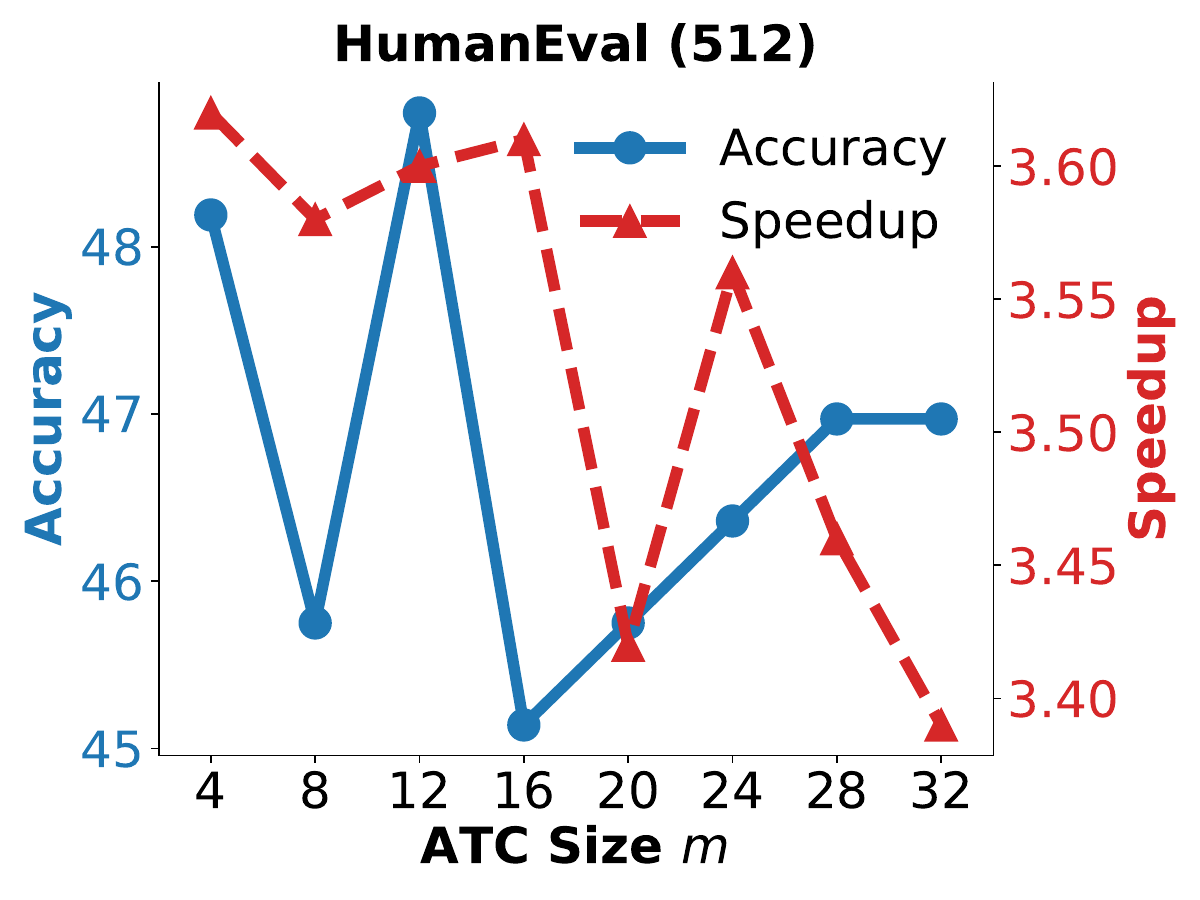}
        \label{fig:ATC_humaneval_512}
    \end{subfigure}
    \hfill
    \begin{subfigure}[b]{0.24\textwidth}
        \centering
        \includegraphics[width=\textwidth]{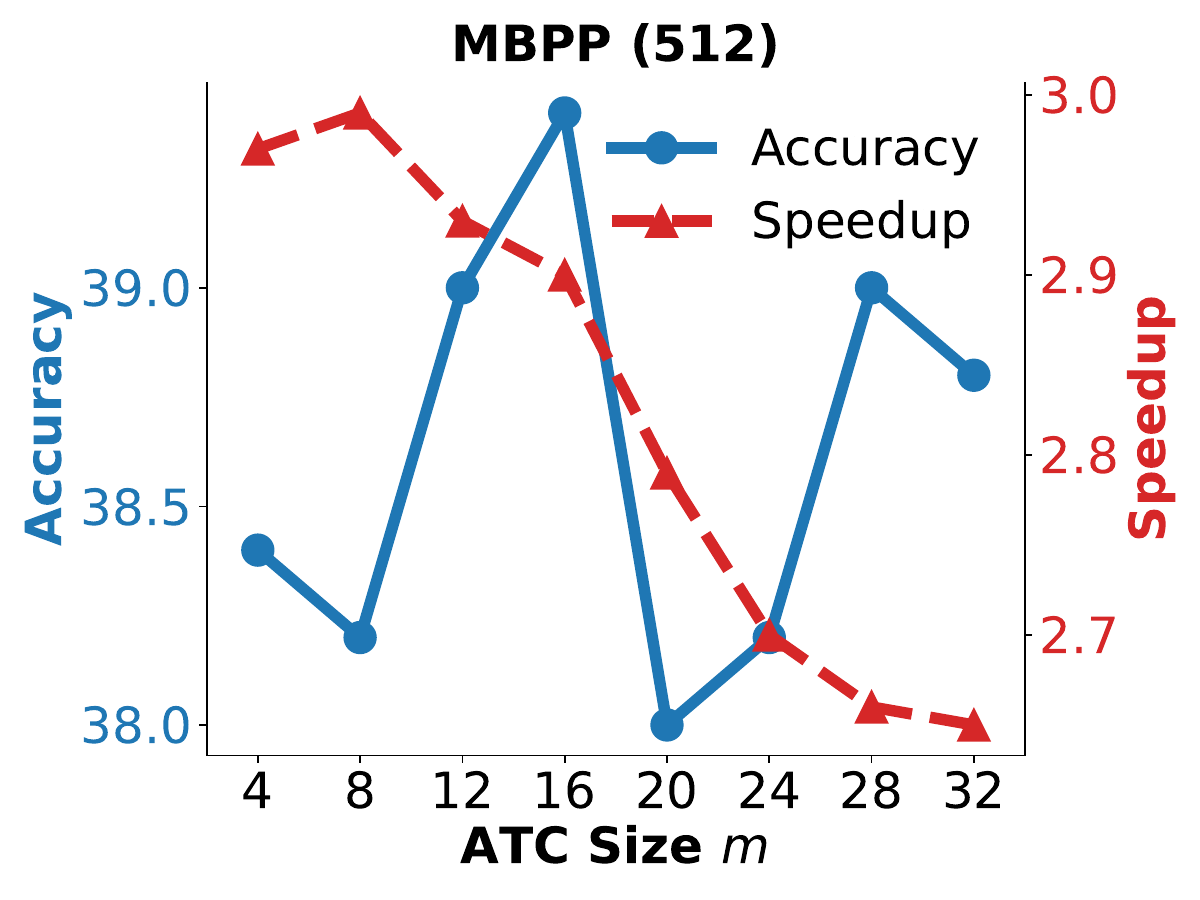}
        \label{fig:ATC_mbpp_512}
    \end{subfigure}
    \hfill
    \begin{subfigure}[b]{0.24\textwidth}
        \centering
        \includegraphics[width=\textwidth]{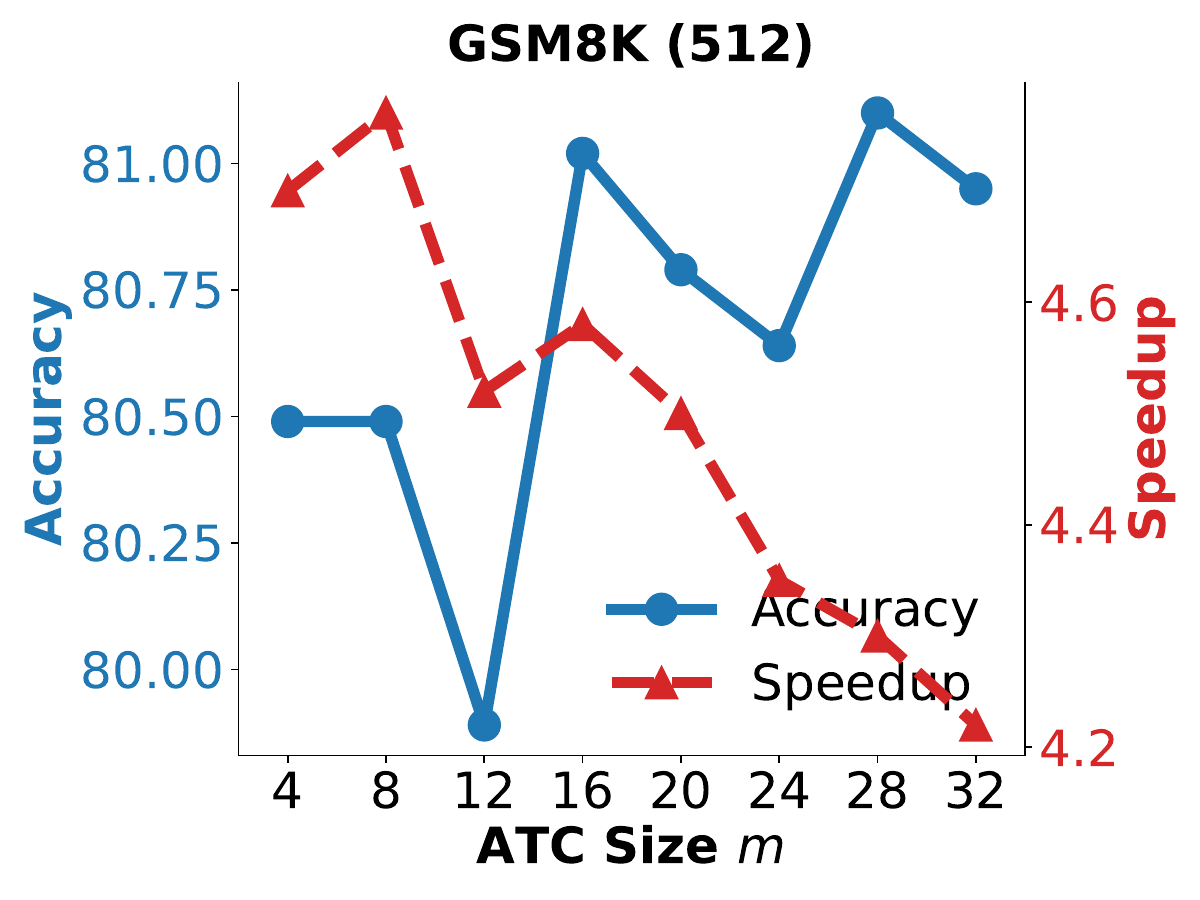}
        \label{fig:ATC_gsm8k_512}
    \end{subfigure}
    \hfill
    \begin{subfigure}[b]{0.24\textwidth}
        \centering
        \includegraphics[width=\textwidth]{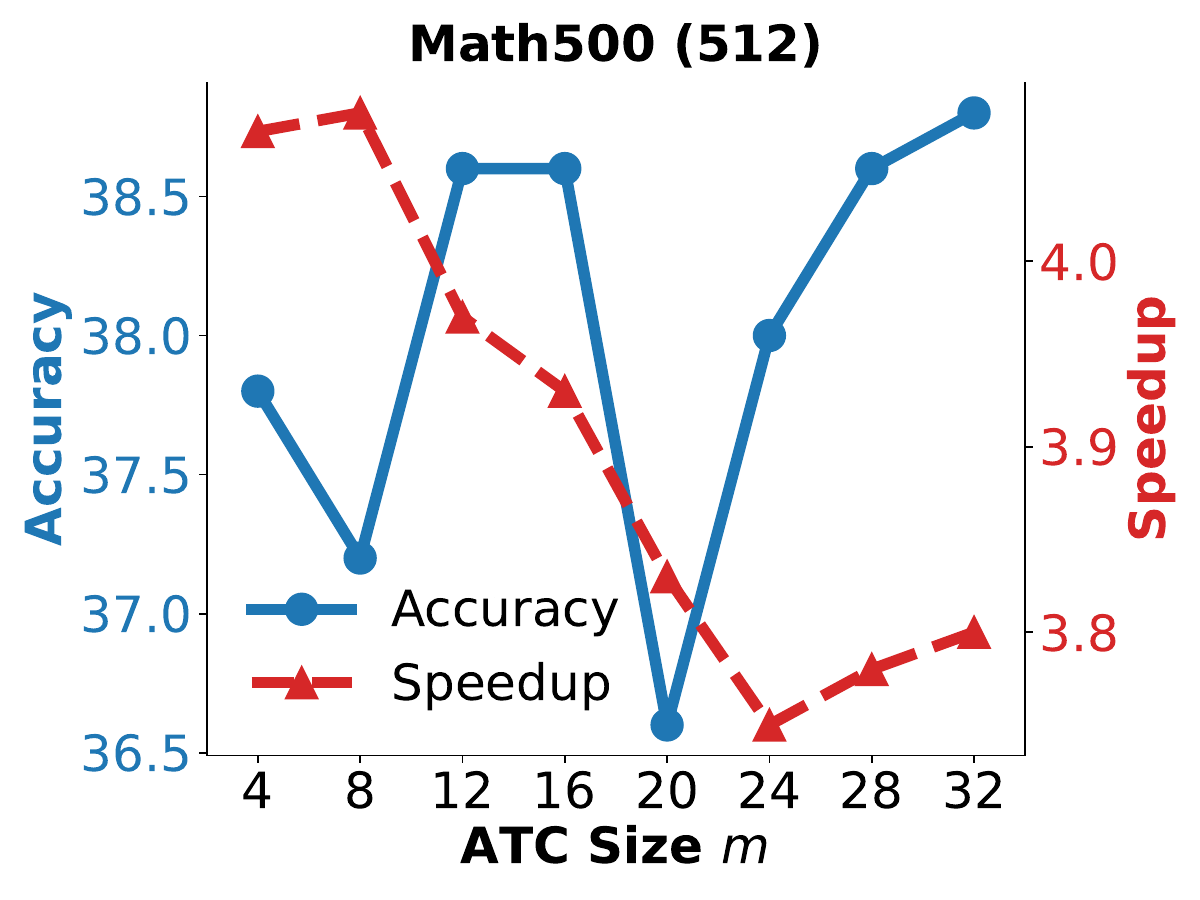}
        \label{fig:ATC_math500_512}
    \end{subfigure}
    \caption{Ablation on the ATC size $m\in[4,32]$ for LLaDA-Ins-8B across sequence lengths and benchmarks; block size 32 throughout.}
    \label{fig:ab_m}
\end{figure*}

\textbf{Module ablation.}
Table~\ref{tab:ablationresult} reports a module-level ablation. Removing Anchor-Guided Generation causes the largest accuracy loss on code generation, with HumanEval dropping by $2.5\%$ at length 256 and $1.3\%$ at length 512, while decoding steps grow correspondingly. Without an anchor-derived prior, the forward pass absorbs error from pending neighbors, which yields lower-quality drafts. Removing Anchor-Perturbed Verification consistently hurts MATH500, with drops of $1.6\%$ at length 256 and $1.8\%$ at length 512. The speedup correspondingly increases in this variant (e.g., from $3.6\times$ to $4.0\times$ on HumanEval at length 512), reflecting that verification introduces additional refinement steps.

\textbf{Effect of ATC size $m$.}
Figure~\ref{fig:ab_m} presents the sensitivity analysis of ATC capacity $m\in[4,32]$. Accuracy follows an inverted-U pattern: a cache that is too small cannot represent the sequence backbone, while one that is too large retains stale anchors that drift away from the current decoding frontier. The location of the peak is mildly task-dependent, but $m$ in $[12,20]$ works well across the four benchmarks. For speedup, larger $m$ values lead to lower throughput due to the increased computational overhead of anchor-derived operations.

\begin{table}[h!]
\centering
\renewcommand{\arraystretch}{1.05}
\setlength{\tabcolsep}{6pt}
\resizebox{\columnwidth}{!}{%
\begin{tabular}{lcccc}
\toprule
\textbf{Dataset} & $k$ & \textbf{Acc (\%) $\uparrow$} & \textbf{Anchors/sample} & \textbf{Contam.\,(\%) $\downarrow$} \\
\midrule
\multirow{4}{*}{GSM8K}
  & 1 & 79.7          & 307.6 & 6.68 \\
  & 2 & \textbf{81.1} & 101.5 & 2.56 \\
  & 3 & 81.0          & 87.1  & 1.79 \\
  & 4 & 78.5          & 35.3  & \textbf{1.74} \\
\midrule
\multirow{4}{*}{HumanEval}
  & 1 & 43.9          & 338.4 & 43.53 \\
  & 2 & \textbf{48.8} & 134.4 & 11.28 \\
  & 3 & 48.5          & 107.9 & 9.45  \\
  & 4 & 45.1          & 50.9  & \textbf{8.87}  \\
\bottomrule
\end{tabular}%
}
\caption{Effect of the consistency window $k$ for \sysname{} on LLaDA-Ins-8B (sequence length 512). \textit{Anchors/sample} is the average number of tokens promoted to the ATC per sample; \textit{Contam.\,(\%)} is the fraction of anchor tokens that fall inside error spans of the generated output, annotated token-level by Claude-Opus-4.6 against the reference solution (lower is better).}
\label{tab:k_ablation}
\end{table}

\textbf{Effect of consistency window $k$.}
Table~\ref{tab:k_ablation} sweeps $k\in\{1,2,3,4\}$ and additionally reports the fraction of anchor tokens that fall inside Claude-Opus-4.6-annotated error spans (\textit{Contam.}). As $k$ grows, contamination decays sharply on both benchmarks, which is the empirical signature of the $q^k$ bound in Proposition~\ref{prop:false_anchor}: each additional consistency step multiplies the false-anchor probability by $q\ll 1$. The number of admitted anchors shrinks even faster, since the same filter also rejects briefly fluctuating tokens. The two effects compose into a non-monotonic accuracy curve, and $k{=}2$ sits at the balance point between purity and density, which motivates it as the default.

\textbf{Effect of block size.}
Table~\ref{tab:blocksize_AB} reports the effect of the Semi-AR block size. A block size of 32 yields the best accuracy across all configurations. Doubling to 64 causes a mild degradation, while 128 leads to pronounced drops, with GSM8K falling to $72.4\%$ at length 512, since anchor tokens become too sparsely distributed to steer the current block. Speedup follows the same trend on GSM8K, whereas HumanEval throughput stays stable.

\begin{table}[!htbp]
\centering
\renewcommand{\arraystretch}{0.9}
\resizebox{1\linewidth}{!}{
\begin{tabular}{c l *{4}{c}}
\toprule[1pt]
\multirow{2}{*}{\textbf{Length}}
& \multirow{2}{*}{\textbf{Method}}
& \multicolumn{2}{c}{HumanEval}
& \multicolumn{2}{c}{GSM8K} \\
\cmidrule(lr){3-4} \cmidrule(lr){5-6}
&   & Acc $\uparrow$ & Speed $\uparrow$ & Acc $\uparrow$ & Speed $\uparrow$ \\
\hline
\multirow{4}{*}{256}
& baseline
& \textcolor{DarkGrey}{38.7} & \textcolor{DarkGrey}{1.0×}
& \textcolor{DarkGrey}{77.4} & \textcolor{DarkGrey}{1.0×} \\
& Ours (BS = 32) & \textbf{41.5} & \textbf{3.1×} & \textbf{79.5} & 3.2× \\
& Ours (BS = 64) & 40.9 & 3.0× & 78.3 & \textbf{3.3×} \\
& Ours (BS = 128) & 38.7 & 3.0× & 70.8 & 2.5× \\
\hline
\multirow{4}{*}{512}
& baseline
& \textcolor{DarkGrey}{43.9} & \textcolor{DarkGrey}{1.0×}
& \textcolor{DarkGrey}{80.3} & \textcolor{DarkGrey}{1.0×} \\
& Ours (BS = 32) & \textbf{48.8} & 3.6× & \textbf{81.1} & \textbf{4.3×} \\
& Ours (BS = 64) & 46.3 & 3.5× & 80.0 & 4.2× \\
& Ours (BS = 128) & 45.7 & \textbf{3.7×} & 72.4 & 2.3× \\
\bottomrule[1pt]
\end{tabular}
}
\caption{Effect of the Semi-AR block size on LLaDA-Ins-8B. Block size 32 attains the best accuracy across configurations.}
\label{tab:blocksize_AB}
\end{table}

\textbf{Sensitivity of $\alpha$ and $\beta$.}
Figure~\ref{fig:ab_alphabeta} sweeps the two knobs of the embedding-space updates: the mask-side guidance strength $\alpha$ and the pending-side perturbation strength $\beta$ defined in Section~\ref{sec:mask_update} and Section~\ref{sec:pending_update}. Both follow an inverted-U pattern with peaks in $\alpha\in[0.05,0.3]$ and $\beta\in[0.1,0.6]$. The two differ in their failure mode at large values: extreme $\alpha$ catastrophically overwrites the mask representation, whereas extreme $\beta$ degrades only gradually, since committed embeddings of pending tokens partially resist the orthogonal perturbation.

\begin{figure}[!htbp]
    \centering
    \begin{subfigure}[b]{0.23\textwidth}
        \centering
        \includegraphics[width=\textwidth]{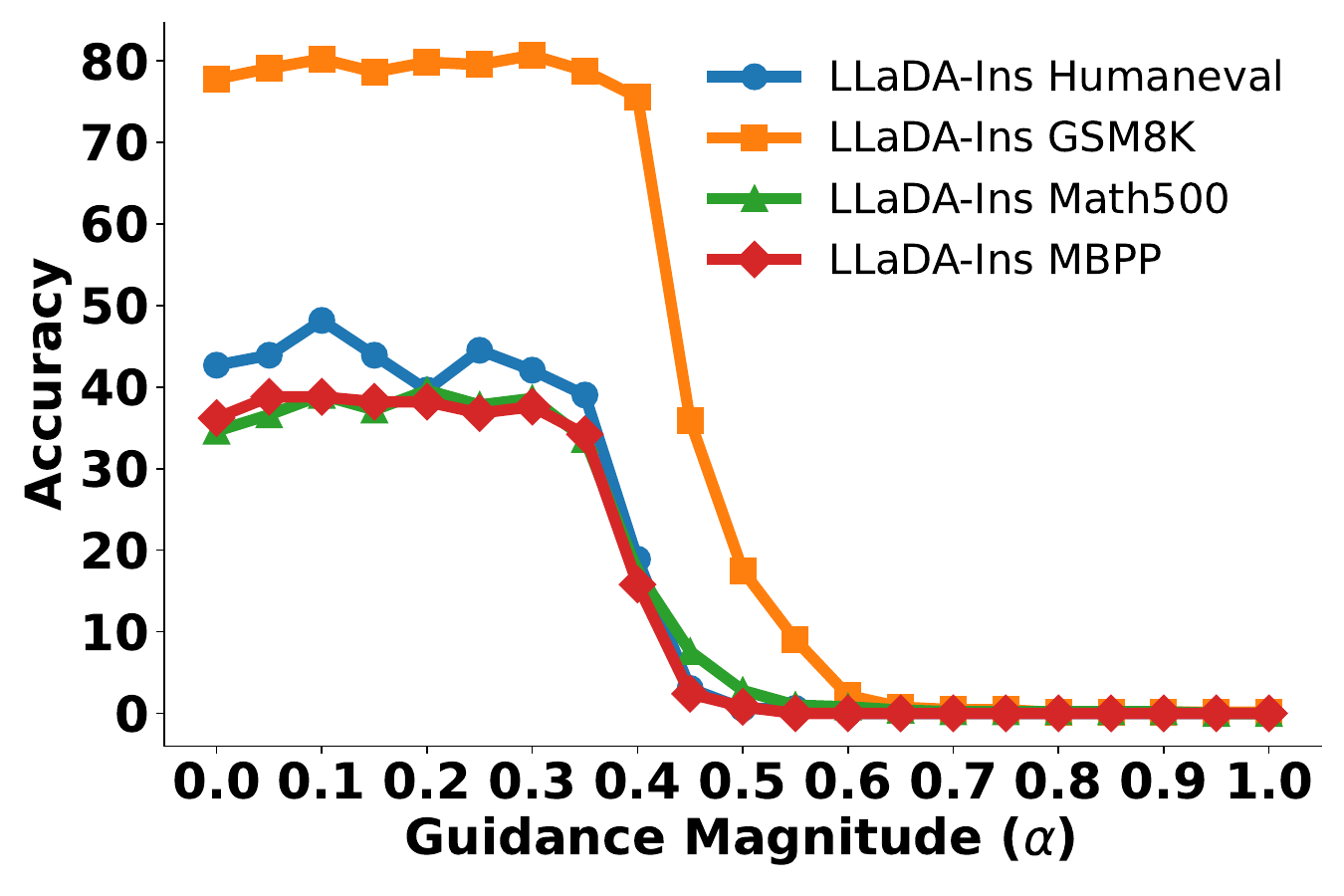}
        \caption{Guidance Strength $\alpha$}
        \label{fig:alpha}
    \end{subfigure}
    \hfill
    \begin{subfigure}[b]{0.23\textwidth}
        \centering
        \includegraphics[width=\textwidth]{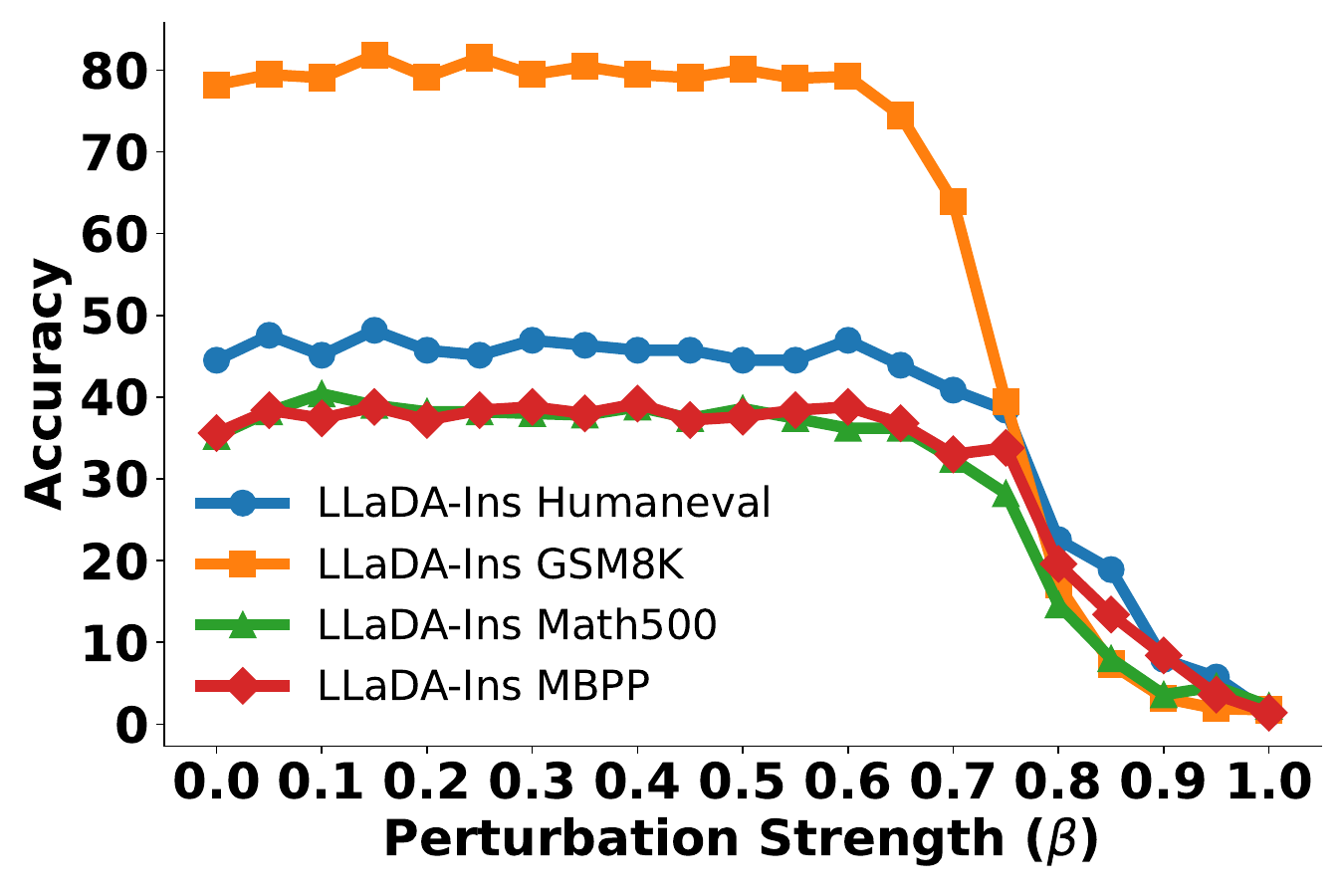}
        \caption{Perturbation Strength $\beta$}
        \label{fig:beta}
    \end{subfigure}
    \caption{Sensitivity to the two embedding-space-update knobs $\alpha$ (mask side) and $\beta$ (pending side) on LLaDA-Ins-8B at sequence length 512.}
    \label{fig:ab_alphabeta}
\end{figure}

\textbf{Design choices in the embedding-space updates.}
Table~\ref{tab:perturbation_type} dissects the two design degrees of freedom in Anchor-Guided Generation and Anchor-Perturbed Verification of Section~\ref{sec:mask_update} and Section~\ref{sec:pending_update}: the \textit{magnitude form} on the mask side and the \textit{direction form} on the pending side. For the magnitude form, replacing the entropy-modulated $\gamma^{i}_t = \alpha\bar{E}^{i}_t$ with a constant $\gamma^{i}_t = \alpha$ produces a consistent but moderate drop, which confirms that per-token entropy modulation contributes beyond the global $\alpha$. For the direction form, removing the probe loses up to $4.3\%$, while Gaussian noise and random embeddings lose up to $7.4\%$ and $6.6\%$, since they corrupt valid tokens by discarding the anchor signal. Even the parallel probe, which preserves the anchor signal but pushes along $\mathbf{e}^{i}_t$, fails to expose fragile predictions because it merely reinforces the token's existing direction. Orthogonality is therefore the load-bearing design choice.

\begin{table}[!htbp]
\centering
\renewcommand{\arraystretch}{0.9}
\resizebox{1\linewidth}{!}{
\begin{tabular}{c l llll}
\toprule[1pt]
\multirow{2}{*}{\textbf{Length}}
& \multirow{2}{*}{\textbf{Variant}}
& \multicolumn{2}{c}{HumanEval}
& \multicolumn{2}{c}{GSM8K} \\
\cmidrule(lr){3-4} \cmidrule(lr){5-6}
&  & Acc $\uparrow$ & Speed $\uparrow$ & Acc $\uparrow$ & Speed $\uparrow$ \\
\hline
\multirow{6}{*}{256}
& \textbf{Ours (full)}
& \textbf{41.5} & \textbf{3.1×} & \textbf{79.5} & \textbf{3.2×} \\
\cmidrule(lr){2-6}
& \multicolumn{5}{l}{\emph{Anchor-Guided Generation: magnitude form}} \\
& \quad Constant $\alpha$
& 40.3$_{\textcolor{red}{-1.2}}$  & 3.0× & 78.9$_{\textcolor{red}{-0.6}}$ & 3.1× \\
\cmidrule(lr){2-6}
& \multicolumn{5}{l}{\emph{Anchor-Perturbed Verification: direction form}} \\
& \quad Parallel probe
& 36.0$_{\textcolor{red}{-5.5}}$ & 2.8× & 77.1$_{\textcolor{red}{-2.4}}$ & 2.7× \\
& \quad Gaussian noise
& 38.4$_{\textcolor{red}{-3.1}}$ & 3.0× & 72.6$_{\textcolor{red}{-6.9}}$ & 2.6× \\
& \quad Random embedding
& 34.9$_{\textcolor{red}{-6.6}}$ & 2.7× & 77.2$_{\textcolor{red}{-2.3}}$ & 2.8× \\
& \quad No perturbation
& 37.2$_{\textcolor{red}{-4.3}}$ & 2.9× & 77.9$_{\textcolor{red}{-1.6}}$ & 3.0× \\
\hline
\multirow{6}{*}{512}
& \textbf{Ours (full)}
& \textbf{48.8} & \textbf{3.6×} & \textbf{81.1} & \textbf{4.3×} \\
\cmidrule(lr){2-6}
& \multicolumn{5}{l}{\emph{Anchor-Guided Generation: magnitude form}} \\
& \quad Constant $\alpha$
& 46.4$_{\textcolor{red}{-2.4}}$ & 3.4× & 80.3$_{\textcolor{red}{-0.8}}$ & 4.3× \\
\cmidrule(lr){2-6}
& \multicolumn{5}{l}{\emph{Anchor-Perturbed Verification: direction form}} \\
& \quad Parallel probe
& 42.6$_{\textcolor{red}{-6.2}}$ & 3.2× & 75.9$_{\textcolor{red}{-5.2}}$ & 3.9× \\
& \quad Gaussian noise
& 41.4$_{\textcolor{red}{-7.4}}$ & 3.0× & 75.1$_{\textcolor{red}{-6.0}}$ & 3.8× \\
& \quad Random embedding
& 44.5$_{\textcolor{red}{-4.3}}$ & 3.5× & 77.9$_{\textcolor{red}{-3.2}}$ & 4.1× \\
& \quad No perturbation
& 45.7$_{\textcolor{red}{-3.1}}$ & 3.3× & 78.1$_{\textcolor{red}{-3.0}}$ & 4.1× \\
\bottomrule[1pt]
\end{tabular}
}
\caption{Design-level ablation on the embedding-space updates with LLaDA-Ins-8B. \textit{Top block (Anchor-Guided Generation: magnitude form)}: Constant $\alpha$ replaces $\gamma^{i}_t=\alpha\bar{E}^{i}_t$ with $\gamma^{i}_t=\alpha$. \textit{Bottom block (Anchor-Perturbed Verification: direction form)}: Parallel probe projects $\bar{c}_t$ onto $\mathbf{e}^{i}_t$ rather than its orthogonal complement; Gaussian noise is isotropic noise; Random embedding is a random token embedding; No perturbation disables the perturbation.}
\label{tab:perturbation_type}
\end{table}

\section{Conclusion}

We present \sysname{}, a training-free revocable decoding framework for dLLMs that operates in the embedding space. By promoting temporally consistent tokens into a dynamic Anchor Token Cache, \sysname{} addresses error propagation and local error reinforcement through two complementary updates: Anchor-Guided Generation injects an entropy-weighted anchor signal into mask embeddings, and Anchor-Perturbed Verification probes committed tokens with an orthogonal anchor-derived signal to remask those sustained by spurious local consensus. Experiments on math and coding benchmarks show consistent accuracy gains across Dream and LLaDA. As a decoding-time procedure that does not alter the underlying model, \sysname{} inherits the risks of the host dLLM without amplifying them.





\section*{Limitations}
\label{sec:limitations}


Our evaluation of \sysname{} spans instruction-tuned, base, and RL-tuned variants of two diffusion LLM families (Dream and LLaDA) at the 7B--8B scale, covering closed-form math and code benchmarks as well as open-ended question answering and summarization. Several gaps nevertheless remain. The parameter scale we study is capped at 8B, so whether the same anchor dynamics carry over to substantially larger dLLMs, or to multi-turn dialogue and chat-style instruction-following settings, is left to future work. 
The decoding configuration is also not free of brittleness; accuracy degrades sharply at very large Semi-AR block sizes (GSM8K falls to $72.4\%$ at $\text{BS}{=}128$ on LLaDA-Ins-8B at sequence length 512), where anchors become too sparsely distributed within a block to supervise it.
The per-step overhead is modest ($7.6$\,ms, a $9.8\%$ increase on LLaDA-Ins-8B) and is amortized by the reduced step count, but the relative wall-clock benefit shrinks on very short sequences where only a handful of decoding steps are needed.
On the theoretical side, Proposition~\ref{prop:false_anchor} relies on idealized assumptions (i.i.d.\ noise across decoding steps and a positive semantic margin between the ground-truth token and its competitors) that need not hold in practice; in complex contexts an incorrect token may stay top-1 for several steps and slip into the ATC. We therefore treat the exponential bound as a qualitative insight rather than a guarantee on real dLLMs, and rely on the empirical false-anchor decay in Table~\ref{tab:k_ablation} as the actual evidence that the consistency filter suppresses incorrect anchors as $k$ grows.

\section*{Acknowledgments}
This work was supported in part by the UK Engineering and Physical Sciences Research Council (EPSRC) through a Turing AI Fellowship (grant no. EP/V020579/1, EP/V020579/2), KCL’s Impact Acceleration Account (grant no. EP/X525571/1), and in part by the National Natural Science Foundation of China (No. 625B2163, 62576120). A PhD studentship from the Chinese Scholarship Council funds Yizhen Yao. The authors also acknowledge the use of the King’s Computational Research, Engineering, and Technology Environment (CREATE) at King’s College London. 

\bibliography{sample-base}

\appendix
\appendix

\section{Detailed Experimental Setup}
\label{sec:experiments_details}

\subsection{Hyperparameters and Implementation}
For base models (Dream-Base-7B and LLaDA-Base-8B), we use the standard full-sequence diffusion sampling strategy and follow few-shot evaluation settings for each benchmark: zero-shot for HumanEval, 3-shot for MBPP, 4-shot for MATH500, and 8-shot for GSM8K. Note that the RL model in our experiments refers to LLaDA-1.5-8B, which is post-trained with reinforcement learning on top of LLaDA-Ins-8B. We report additional results on these three models (Dream-Base-7B, LLaDA-Base-8B, and LLaDA-1.5-8B) in Appendix~\ref{sec:additional_results}. We set the temperature to 0 for all experiments to ensure reproducibility. The unmasking threshold $\tau$ is selected from $\{0.6, 0.7, 0.8\}$ depending on the model and dataset. Since instruct models are trained with supervised fine-tuning (SFT), they exhibit more concentrated token distributions; accordingly, we use a slightly higher base threshold for instruct models to avoid premature decoding collapse. All experiments are conducted on 8 NVIDIA A100 80GB GPUs.

\textbf{Hyperparameter Details for \sysname.}
For our method, the ATC size $m$ is selected from $\{12, 16, 20\}$ depending on the sequence length and task complexity. The guidance strength $\alpha$ ranges over $[0.05, 0.2]$, with higher values providing stronger guidance at the cost of reduced exploration. The perturbation strength $\beta$ is selected from $[0.1, 0.4]$ for deployment, the sub-interval of the swept range $[0.1, 0.6]$ (Section~\ref{sec:ablation}) used in our main results, controlling the probe intensity for pending token verification. During the first $k-1$ decoding steps where the consistency window is incomplete, we seed the ATC with the single lowest-entropy threshold-passed token at each step to avoid an empty cache.

\textbf{Reporting protocol.} All reported numbers are from a single deterministic run per configuration with temperature $0$, so each cell is a point estimate rather than a mean over seeds.

\textbf{Baseline tuning for base and RL-tuned models.} Since the official WINO and Saber implementations do not specify hyperparameters for these backbones, we tune their native control parameters under the same evaluation framework used for \sysname{}. The standard-decoding rows use the same early-stopping protocol as the other standard-decoder results.

\begin{table*}[!t]
\centering
\scriptsize
\setlength{\tabcolsep}{3pt}
\renewcommand{\arraystretch}{0.95}
\begin{tabular}{lcl ccc ccc ccc ccc}
\toprule[1pt]
\multirow{2}{*}{\textbf{Model}} & \multirow{2}{*}{\textbf{Len}} & \multirow{2}{*}{\textbf{Method}} & \multicolumn{3}{c}{\textbf{HumanEval}} & \multicolumn{3}{c}{\textbf{MBPP}} & \multicolumn{3}{c}{\textbf{GSM8K}} & \multicolumn{3}{c}{\textbf{MATH500}} \\
\cmidrule(lr){4-6} \cmidrule(lr){7-9} \cmidrule(lr){10-12} \cmidrule(lr){13-15}
& & & Acc & Steps & Speed & Acc & Steps & Speed & Acc & Steps & Speed & Acc & Steps & Speed \\
\hline
\multirow{8}{*}{\textbf{Dream-Base-7B}} & \multirow{4}{*}{256} & Baseline & \textcolor{DarkGrey}{50.6} & \textcolor{DarkGrey}{238.4} & \textcolor{DarkGrey}{1.0×} & \textcolor{DarkGrey}{54.6} & \textcolor{DarkGrey}{239.1} & \textcolor{DarkGrey}{1.0×} & \textcolor{DarkGrey}{75.2} & \textcolor{DarkGrey}{237.5} & \textcolor{DarkGrey}{1.0×} & \textcolor{DarkGrey}{36.9} & \textcolor{DarkGrey}{240.2} & \textcolor{DarkGrey}{1.0×} \\
& & WINO & 48.8 & 125.4 & 1.5× & 55.8 & 87.7 & 2.7× & 75.7 & 137.8 & \textbf{1.9×} & 37.8 & 94.1 & 2.3× \\
& & Saber & 51.2 & 130.8 & 1.3× & 55.2 & 96.9 & 1.2× & 74.5 & 153.6 & 1.4× & 38.0 & 107.3 & 1.6× \\
& & \sysname{} & \textbf{54.3} & 111.4 & \textbf{2.0×} & \textbf{57.2} & 73.8 & \textbf{3.1×} & \textbf{76.3} & 116.2 & \textbf{1.9×} & \textbf{39.8} & 68.1 & \textbf{3.4×} \\
\cmidrule(lr){2-15}
& \multirow{4}{*}{512} & Baseline & \textcolor{DarkGrey}{54.3} & \textcolor{DarkGrey}{494.2} & \textcolor{DarkGrey}{1.0×} & \textcolor{DarkGrey}{55.8} & \textcolor{DarkGrey}{495.7} & \textcolor{DarkGrey}{1.0×} & \textcolor{DarkGrey}{75.5} & \textcolor{DarkGrey}{493.3} & \textcolor{DarkGrey}{1.0×} & \textcolor{DarkGrey}{37.4} & \textcolor{DarkGrey}{496.1} & \textcolor{DarkGrey}{1.0×} \\
& & WINO & 52.5 & 196.1 & 1.9× & 54.2 & 117.5 & 2.7× & 76.1 & 139.9 & 2.8× & 37.2 & 134.6 & 2.5× \\
& & Saber & 51.1 & 202.5 & 1.4× & 56.2 & 124.3 & 2.3× & 76.9 & 143.9 & 2.1× & 36.6 & 146.8 & 2.8× \\
& & \sysname{} & \textbf{54.9} & 195.1 & \textbf{2.4×} & \textbf{56.8} & 113.6 & \textbf{3.9×} & \textbf{77.8} & 138.4 & \textbf{3.3×} & \textbf{40.2} & 93.9 & \textbf{4.6×} \\
\midrule
\multirow{8}{*}{\textbf{LLaDA-Base-8B}} & \multirow{4}{*}{256} & Baseline & \textcolor{DarkGrey}{32.3} & \textcolor{DarkGrey}{240.5} & \textcolor{DarkGrey}{1.0×} & \textcolor{DarkGrey}{39.6} & \textcolor{DarkGrey}{238.8} & \textcolor{DarkGrey}{1.0×} & \textcolor{DarkGrey}{69.1} & \textcolor{DarkGrey}{237.2} & \textcolor{DarkGrey}{1.0×} & \textcolor{DarkGrey}{30.0} & \textcolor{DarkGrey}{236.9} & \textcolor{DarkGrey}{1.0×} \\
& & WINO & 34.1 & 204.6 & 1.2× & 40.6 & 121.5 & 1.2× & 70.5 & 134.9 & 1.1× & \textbf{31.2} & 113.8 & 1.3× \\
& & Saber & 32.3 & 246.2 & 1.5× & 40.0 & 185.5 & 1.7× & 69.7 & 160.5 & 1.7× & 30.8 & 150.5 & \textbf{2.1×} \\
& & \sysname{} & \textbf{34.8} & 124.7 & \textbf{1.8×} & \textbf{41.4} & 115.8 & \textbf{1.9×} & \textbf{71.3} & 125.3 & \textbf{1.8×} & \textbf{31.2} & 105.1 & \textbf{2.1×} \\
\cmidrule(lr){2-15}
& \multirow{4}{*}{512} & Baseline & \textcolor{DarkGrey}{32.9} & \textcolor{DarkGrey}{495.1} & \textcolor{DarkGrey}{1.0×} & \textcolor{DarkGrey}{39.8} & \textcolor{DarkGrey}{493.7} & \textcolor{DarkGrey}{1.0×} & \textcolor{DarkGrey}{70.8} & \textcolor{DarkGrey}{496.5} & \textcolor{DarkGrey}{1.0×} & \textcolor{DarkGrey}{30.8} & \textcolor{DarkGrey}{492.8} & \textcolor{DarkGrey}{1.0×} \\
& & WINO & 33.5 & 267.4 & 1.5× & 41.4 & 252.5 & 1.3× & 70.2 & 179.4 & 1.4× & 32.0 & 201.8 & 1.6× \\
& & Saber & 32.3 & 258.9 & \textbf{2.7×} & 41.2 & 363.4 & 1.8× & 70.3 & 282.6 & 2.6× & 30.2 & 296.4 & \textbf{2.5×} \\
& & \sysname{} & \textbf{34.8} & 179.2 & 2.6× & \textbf{42.0} & 180.4 & \textbf{2.6×} & \textbf{71.8} & 158.7 & \textbf{2.9×} & \textbf{32.2} & 194.5 & 2.4× \\
\midrule
\multirow{8}{*}{\textbf{LLaDA-1.5-8B}} & \multirow{4}{*}{256} & Baseline & \textcolor{DarkGrey}{38.4} & \textcolor{DarkGrey}{239.7} & \textcolor{DarkGrey}{1.0×} & \textcolor{DarkGrey}{38.6} & \textcolor{DarkGrey}{237.9} & \textcolor{DarkGrey}{1.0×} & \textcolor{DarkGrey}{79.2} & \textcolor{DarkGrey}{236.4} & \textcolor{DarkGrey}{1.0×} & \textcolor{DarkGrey}{33.4} & \textcolor{DarkGrey}{238.1} & \textcolor{DarkGrey}{1.0×} \\
& & WINO & 41.5 & 86.3 & 2.6× & 39.6 & 88.9 & 2.6× & 80.3 & 56.0 & 4.7× & 31.8 & 60.2 & 3.9× \\
& & Saber & 40.8 & 117.3 & 2.2× & 39.4 & 108.1 & 3.2× & 80.8 & 72.4 & 4.2× & 32.8 & 82.1 & 4.5× \\
& & \sysname{} & \textbf{42.1} & 72.3 & \textbf{3.2×} & \textbf{39.8} & 63.5 & \textbf{3.6×} & \textbf{81.1} & 44.1 & \textbf{5.1×} & \textbf{33.6} & 46.2 & \textbf{4.8×} \\
\cmidrule(lr){2-15}
& \multirow{4}{*}{512} & Baseline & \textcolor{DarkGrey}{45.1} & \textcolor{DarkGrey}{493.5} & \textcolor{DarkGrey}{1.0×} & \textcolor{DarkGrey}{37.6} & \textcolor{DarkGrey}{496.8} & \textcolor{DarkGrey}{1.0×} & \textcolor{DarkGrey}{82.9} & \textcolor{DarkGrey}{494.2} & \textcolor{DarkGrey}{1.0×} & \textcolor{DarkGrey}{38.6} & \textcolor{DarkGrey}{495.3} & \textcolor{DarkGrey}{1.0×} \\
& & WINO & 46.9 & 139.2 & 2.3× & 36.8 & 107.4 & 2.8× & 83.7 & 78.7 & 4.7× & 38.8 & 87.6 & \textbf{5.2×} \\
& & Saber & 47.0 & 166.9 & 3.6× & 37.2 & 151.8 & 3.7× & 82.6 & 130.7 & 5.2× & 38.6 & 178.0 & 4.5× \\
& & \sysname{} & \textbf{47.6} & 107.4 & \textbf{4.1×} & \textbf{38.4} & 106.8 & \textbf{4.1×} & \textbf{83.9} & 71.6 & \textbf{6.4×} & \textbf{40.2} & 89.1 & \textbf{5.2×} \\
\bottomrule[1pt]
\end{tabular}
\caption{Competitive decoding results on base and RL-tuned models. Each benchmark reports accuracy (Acc), decoding steps (Steps), and wall-clock speedup (Speed) over the standard decoder. Dream-Base-7B and LLaDA-Base-8B are base models, while LLaDA-1.5-8B is RL-tuned. Baseline results are shown in \textcolor{DarkGrey}{grey}; \textbf{bold} marks the best or tied-best accuracy and speedup in each setting.}
\label{tab:diff_models}
\end{table*}

\begin{table*}[!t]
\centering
\small
\setlength{\tabcolsep}{4pt}
\renewcommand{\arraystretch}{0.95}
\begin{tabular}{l l cc cccc}
\toprule[1pt]
\multirow{2}{*}{\textbf{Model}} & \multirow{2}{*}{\textbf{Method}}
& \multicolumn{2}{c}{\textbf{NQ-Open}}
& \multicolumn{4}{c}{\textbf{CNN/DailyMail}} \\
\cmidrule(lr){3-4} \cmidrule(lr){5-8}
& & EM & Speed & BERTScore & R-1 & R-L & Speed \\
\midrule
\multirow{4}{*}{\textbf{Dream-Ins-7B}}
 & Baseline   & \textcolor{DarkGrey}{15.8} & \textcolor{DarkGrey}{1.0$\times$} & \textcolor{DarkGrey}{85.5} & \textcolor{DarkGrey}{22.6} & \textcolor{DarkGrey}{15.6} & \textcolor{DarkGrey}{1.0$\times$} \\
 & WINO       & 16.1 & 1.7$\times$ & 85.8 & 23.2 & 15.8 & 1.4$\times$ \\
 & Saber      & 16.4 & 1.4$\times$ & 85.6 & 22.9 & 16.0 & 1.9$\times$ \\
 & \sysname{} & \textbf{17.5}$_{\textcolor{LightGreen}{+1.7}}$ & \textbf{2.1$\times$} & \textbf{86.0} & \textbf{23.6} & \textbf{16.5} & \textbf{2.3$\times$} \\
\midrule
\multirow{4}{*}{\textbf{LLaDA-Ins-8B}}
 & Baseline   & \textcolor{DarkGrey}{12.3} & \textcolor{DarkGrey}{1.0$\times$} & \textcolor{DarkGrey}{82.2} & \textcolor{DarkGrey}{19.8} & \textcolor{DarkGrey}{13.7} & \textcolor{DarkGrey}{1.0$\times$} \\
 & WINO       & 13.4 & 2.0$\times$ & 82.3 & 20.3 & 13.8 & 2.2$\times$ \\
 & Saber      & 13.2 & 1.6$\times$ & 82.5 & 19.9 & 14.0 & 1.8$\times$ \\
 & \sysname{} & \textbf{14.6}$_{\textcolor{LightGreen}{+2.3}}$ & \textbf{2.3$\times$} & \textbf{82.8} & \textbf{20.5} & \textbf{14.3} & \textbf{2.5$\times$} \\
\bottomrule[1pt]
\end{tabular}
\caption{Open-ended generation results at sequence length $512$. NQ-Open is scored by exact match (EM); CNN/DailyMail is scored by BERTScore, ROUGE-1, and ROUGE-L. Speed is wall-clock speedup over the standard decoder for each task.}
\label{tab:open_ended}
\end{table*}

\subsection{Datasets}
We evaluate on four benchmarks covering mathematical reasoning and code generation:
\begin{itemize}
    \item \textbf{GSM8K}~\citep{cobbe2021trainingverifierssolvemath}: A dataset of grade-school-level multi-step math word problems requiring arithmetic reasoning.
    \item \textbf{MATH500}~\citep{lightman2024lets}: A uniformly sampled 500-problem subset of the MATH test set that is representative across its seven competition-mathematics subjects.
    \item \textbf{MBPP}~\citep{austin2021programsynthesislargelanguage}: A benchmark of basic Python function synthesis tasks designed to test fundamental programming capabilities.
    \item \textbf{HumanEval}~\citep{chen2021evaluatinglargelanguagemodels}: A set of 164 manually curated programming problems with unit-test-based evaluation.
\end{itemize}

\section{Additional Experimental Results}
\label{sec:additional_results}

\subsection{Multi-Seed Evaluation}
\label{app:multiseed}

Decoding at temperature $0$ is deterministic, so the main results do not admit seed-based variance estimates. We therefore rerun all methods at temperature $0.2$ with five random seeds. Table~\ref{tab:multiseed} reports the mean accuracy and the 95\% Student's $t$-confidence interval across the five runs, together with mean decoding steps and mean wall-clock speedup. \sysname{} achieves the highest mean accuracy in all $16$ model--length--benchmark settings. However, several confidence intervals overlap, so these results do not establish that every individual improvement is statistically significant.

\begin{table*}[!t]
\centering
\scriptsize
\setlength{\tabcolsep}{2.2pt}
\renewcommand{\arraystretch}{0.92}
\resizebox{\textwidth}{!}{%
\begin{tabular}{lcl ccc ccc ccc ccc}
\toprule[1pt]
\multirow{2}{*}{\textbf{Model}} & \multirow{2}{*}{\textbf{Len}} & \multirow{2}{*}{\textbf{Method}} & \multicolumn{3}{c}{\textbf{HumanEval}} & \multicolumn{3}{c}{\textbf{MBPP}} & \multicolumn{3}{c}{\textbf{GSM8K}} & \multicolumn{3}{c}{\textbf{MATH500}} \\
\cmidrule(lr){4-6} \cmidrule(lr){7-9} \cmidrule(lr){10-12} \cmidrule(lr){13-15}
& & & Acc $\pm$ CI & Steps & Speed & Acc $\pm$ CI & Steps & Speed & Acc $\pm$ CI & Steps & Speed & Acc $\pm$ CI & Steps & Speed \\
\midrule
\multirow{8}{*}{\textbf{LLaDA-Ins-8B}} & \multirow{4}{*}{256} & Baseline & \textcolor{DarkGrey}{$38.4\pm2.06$} & \textcolor{DarkGrey}{256.0} & \textcolor{DarkGrey}{1.0$\times$} & \textcolor{DarkGrey}{$36.5\pm0.88$} & \textcolor{DarkGrey}{256.0} & \textcolor{DarkGrey}{1.0$\times$} & \textcolor{DarkGrey}{$77.3\pm0.86$} & \textcolor{DarkGrey}{256.0} & \textcolor{DarkGrey}{1.0$\times$} & \textcolor{DarkGrey}{$34.0\pm0.71$} & \textcolor{DarkGrey}{256.0} & \textcolor{DarkGrey}{1.0$\times$} \\
& & WINO & $37.3\pm1.65$ & 75.5 & 2.3$\times$ & $36.1\pm1.52$ & 91.1 & 1.9$\times$ & $77.3\pm1.09$ & 53.9 & 2.9$\times$ & $34.4\pm0.79$ & 79.4 & 2.0$\times$ \\
& & Saber & $39.2\pm2.32$ & 102.9 & 2.4$\times$ & $37.3\pm0.98$ & 112.2 & 2.2$\times$ & $77.8\pm0.66$ & 119.9 & 1.9$\times$ & $35.0\pm0.37$ & 136.0 & 1.8$\times$ \\
& & \sysname{} & $\mathbf{41.3\pm2.29}$ & 75.6 & 3.1$\times$ & $\mathbf{38.2\pm0.82}$ & 87.3 & 2.5$\times$ & $\mathbf{79.5\pm0.96}$ & 57.1 & 3.2$\times$ & $\mathbf{35.4\pm0.95}$ & 75.1 & 2.9$\times$ \\
\cmidrule(lr){2-15}
& \multirow{4}{*}{512} & Baseline & \textcolor{DarkGrey}{$44.3\pm2.34$} & \textcolor{DarkGrey}{512.0} & \textcolor{DarkGrey}{1.0$\times$} & \textcolor{DarkGrey}{$38.4\pm0.36$} & \textcolor{DarkGrey}{512.0} & \textcolor{DarkGrey}{1.0$\times$} & \textcolor{DarkGrey}{$80.4\pm0.50$} & \textcolor{DarkGrey}{512.0} & \textcolor{DarkGrey}{1.0$\times$} & \textcolor{DarkGrey}{$37.7\pm0.75$} & \textcolor{DarkGrey}{512.0} & \textcolor{DarkGrey}{1.0$\times$} \\
& & WINO & $44.2\pm3.32$ & 130.6 & 2.9$\times$ & $37.5\pm0.40$ & 134.4 & 2.4$\times$ & $78.7\pm0.63$ & 83.3 & 3.6$\times$ & $35.7\pm0.83$ & 118.3 & 2.7$\times$ \\
& & Saber & $44.9\pm2.38$ & 195.7 & 2.3$\times$ & $37.9\pm0.23$ & 188.2 & 2.6$\times$ & $79.7\pm0.56$ & 144.4 & 3.1$\times$ & $37.4\pm0.96$ & 191.0 & 2.5$\times$ \\
& & \sysname{} & $\mathbf{49.0\pm2.60}$ & 125.4 & 3.7$\times$ & $\mathbf{39.4\pm0.62}$ & 110.0 & 2.9$\times$ & $\mathbf{81.1\pm0.56}$ & 89.1 & 4.3$\times$ & $\mathbf{39.2\pm0.65}$ & 116.5 & 3.9$\times$ \\
\midrule
\multirow{8}{*}{\textbf{Dream-Ins-7B}} & \multirow{4}{*}{256} & Baseline & \textcolor{DarkGrey}{$55.0\pm1.09$} & \textcolor{DarkGrey}{256.0} & \textcolor{DarkGrey}{1.0$\times$} & \textcolor{DarkGrey}{$57.6\pm0.43$} & \textcolor{DarkGrey}{256.0} & \textcolor{DarkGrey}{1.0$\times$} & \textcolor{DarkGrey}{$80.3\pm0.54$} & \textcolor{DarkGrey}{256.0} & \textcolor{DarkGrey}{1.0$\times$} & \textcolor{DarkGrey}{$37.9\pm1.62$} & \textcolor{DarkGrey}{256.0} & \textcolor{DarkGrey}{1.0$\times$} \\
& & WINO & $51.5\pm1.02$ & 85.6 & 2.4$\times$ & $57.1\pm0.60$ & 65.1 & 3.6$\times$ & $78.5\pm0.46$ & 72.5 & 3.4$\times$ & $38.4\pm1.85$ & 128.2 & 1.6$\times$ \\
& & Saber & $52.5\pm1.83$ & 165.1 & 1.5$\times$ & $57.8\pm0.24$ & 191.0 & 1.3$\times$ & $79.8\pm0.30$ & 182.8 & 1.3$\times$ & $38.9\pm0.90$ & 180.7 & 1.4$\times$ \\
& & \sysname{} & $\mathbf{56.2\pm1.22}$ & 84.2 & 3.0$\times$ & $\mathbf{58.8\pm0.34}$ & 62.2 & 4.1$\times$ & $\mathbf{80.7\pm0.75}$ & 66.3 & 3.9$\times$ & $\mathbf{41.6\pm1.75}$ & 78.5 & 3.5$\times$ \\
\cmidrule(lr){2-15}
& \multirow{4}{*}{512} & Baseline & \textcolor{DarkGrey}{$56.2\pm1.22$} & \textcolor{DarkGrey}{512.0} & \textcolor{DarkGrey}{1.0$\times$} & \textcolor{DarkGrey}{$56.3\pm0.47$} & \textcolor{DarkGrey}{512.0} & \textcolor{DarkGrey}{1.0$\times$} & \textcolor{DarkGrey}{$79.8\pm0.60$} & \textcolor{DarkGrey}{512.0} & \textcolor{DarkGrey}{1.0$\times$} & \textcolor{DarkGrey}{$38.5\pm1.80$} & \textcolor{DarkGrey}{512.0} & \textcolor{DarkGrey}{1.0$\times$} \\
& & WINO & $51.1\pm1.13$ & 99.9 & 4.0$\times$ & $55.4\pm0.67$ & 85.3 & 5.3$\times$ & $79.1\pm0.51$ & 93.4 & 5.5$\times$ & $39.6\pm2.06$ & 141.9 & 2.8$\times$ \\
& & Saber & $53.2\pm2.04$ & 261.8 & 1.9$\times$ & $55.8\pm0.26$ & 385.2 & 1.3$\times$ & $79.4\pm0.33$ & 318.5 & 1.6$\times$ & $41.5\pm1.00$ & 254.5 & 2.0$\times$ \\
& & \sysname{} & $\mathbf{56.8\pm1.35}$ & 83.8 & 5.6$\times$ & $\mathbf{57.3\pm0.38}$ & 75.2 & 7.1$\times$ & $\mathbf{80.8\pm0.83}$ & 79.1 & 6.7$\times$ & $\mathbf{44.8\pm1.94}$ & 95.7 & 5.4$\times$ \\
\bottomrule[1pt]
\end{tabular}%
}
\caption{Five-run evaluation at temperature $0.2$. Accuracy is reported as the mean $\pm$ the 95\% Student's $t$-confidence-interval half-width across five random seeds; Steps and Speed are run means. Speed is wall-clock speedup over the standard decoder. Baseline results are shown in \textcolor{DarkGrey}{grey}, and \textbf{bold} marks the highest mean accuracy in each setting.}
\label{tab:multiseed}
\end{table*}

\subsection{Generalization Across Base and RL-Tuned dLLMs}

Table~\ref{tab:diff_models} evaluates \sysname{} against WINO and Saber on two base models and one RL-tuned model. \sysname{} achieves the best or tied-best accuracy in all $24$ model--length--benchmark settings and the best or tied-best speedup in $22$ of them. Within the LLaDA family, its speedup is consistently higher on the RL-tuned model than on the base model across both generation lengths and all four benchmarks. This trend suggests that stronger post-trained representations can yield more reliable anchor supervision, although isolating the effect of RL post-training from other model differences requires further study.

\subsection{Generalization to Native Block Diffusion Models}
\label{app:block_models}

We further evaluate \sysname{} on three native block diffusion backbones: SDAR-8B-Chat~\citep{cheng-etal-2026-sdar}, LLaDA2.0-mini~\citep{bie2025llada2}, and LLaDA2.1-mini~\citep{bie2026llada21}. \sysname{} is applied without replacing their native block decoders. For LLaDA2.1, we retain its joint Mask-to-Token (M2T) and Token-to-Token (T2T) decoder: stable filled or edited tokens can become anchors, while candidates that fail \sysname{} verification are returned to the native editable pool.

\begin{table*}[!tbp]
\centering
\small
\setlength{\tabcolsep}{4pt}
\renewcommand{\arraystretch}{0.95}
\begin{tabular}{l l cc cc cc cc}
\toprule[1pt]
\multirow{2}{*}{\textbf{Backbone}} & \multirow{2}{*}{\textbf{Decoder}}
& \multicolumn{2}{c}{\textbf{HumanEval}}
& \multicolumn{2}{c}{\textbf{MBPP}}
& \multicolumn{2}{c}{\textbf{GSM8K}}
& \multicolumn{2}{c}{\textbf{MATH500}} \\
\cmidrule(lr){3-4} \cmidrule(lr){5-6} \cmidrule(lr){7-8} \cmidrule(lr){9-10}
& & Acc & Speed & Acc & Speed & Acc & Speed & Acc & Speed \\
\midrule
\multirow{2}{*}{SDAR-8B-Chat}
& Native      & \textcolor{DarkGrey}{62.1} & \textcolor{DarkGrey}{1.0$\times$} & \textcolor{DarkGrey}{59.4} & \textcolor{DarkGrey}{1.0$\times$} & \textcolor{DarkGrey}{84.2} & \textcolor{DarkGrey}{1.0$\times$} & \textcolor{DarkGrey}{47.4} & \textcolor{DarkGrey}{1.0$\times$} \\
& +\sysname{} & \textbf{63.3} & \textbf{1.5$\times$} & \textbf{60.2} & \textbf{1.3$\times$} & \textbf{84.7} & \textbf{1.9$\times$} & \textbf{48.8} & \textbf{1.7$\times$} \\
\midrule
\multirow{2}{*}{LLaDA2.0-mini}
& Native      & \textcolor{DarkGrey}{54.3} & \textcolor{DarkGrey}{1.0$\times$} & \textcolor{DarkGrey}{54.6} & \textcolor{DarkGrey}{1.0$\times$} & \textcolor{DarkGrey}{86.5} & \textcolor{DarkGrey}{1.0$\times$} & \textcolor{DarkGrey}{44.6} & \textcolor{DarkGrey}{1.0$\times$} \\
& +\sysname{} & \textbf{56.7} & \textbf{1.7$\times$} & \textbf{55.2} & \textbf{1.9$\times$} & \textbf{87.2} & \textbf{2.1$\times$} & \textbf{45.2} & \textbf{2.0$\times$} \\
\midrule
\multirow{2}{*}{LLaDA2.1-mini}
& Native      & \textcolor{DarkGrey}{53.1} & \textcolor{DarkGrey}{1.0$\times$} & \textcolor{DarkGrey}{52.8} & \textcolor{DarkGrey}{1.0$\times$} & \textcolor{DarkGrey}{85.7} & \textcolor{DarkGrey}{1.0$\times$} & \textcolor{DarkGrey}{42.0} & \textcolor{DarkGrey}{1.0$\times$} \\
& +\sysname{} & \textbf{57.9} & \textbf{2.2$\times$} & \textbf{55.2} & \textbf{2.5$\times$} & \textbf{86.8} & \textbf{2.6$\times$} & \textbf{44.6} & \textbf{1.8$\times$} \\
\bottomrule[1pt]
\end{tabular}
\caption{Generalization to native block diffusion models at generation length $512$ and block size $32$. Accuracy is reported in percent and speed is wall-clock speedup over each backbone's native decoder. \sysname{} augments rather than replaces the native decoding process.}
\label{tab:block_models}
\end{table*}

Table~\ref{tab:block_models} reports results at generation length $512$ and block size $32$. We omit length $256$ because these models typically produce longer outputs and are frequently truncated under that budget. Moreover, the native generation lengths of LLaDA2.0 and LLaDA2.1 are $2048$ and $4096$, respectively; the fixed-length results here should therefore not be interpreted as their best attainable performance. Across all three backbones and four benchmarks, \sysname{} improves accuracy by $0.5$--$4.8$ points while providing $1.3\times$--$2.6\times$ speedups. The particularly large gains on LLaDA2.1 are consistent with the possibility that \sysname{} benefits from a decoder trained for T2T correction, although isolating this interaction requires further study.

\subsection{Threshold-Only Parallel Drafting Control}
\label{app:threshold_only}

To separate the gains of the anchor mechanisms from those of threshold-based parallel drafting, we construct a threshold-only baseline using the parallel-decoding component of Fast-dLLM~\citep{wu2025fastdllmtrainingfreeaccelerationdiffusion}. We evaluate this control, the standard decoder, and \sysname{} on Dream-Ins-7B with sequence length $512$, using the same temperature ($0$), block size ($32$), and generation length for all methods.

\begin{table*}[!tbp]
\centering
\small
\setlength{\tabcolsep}{4pt}
\renewcommand{\arraystretch}{0.95}
\begin{tabular}{l cc cc cc cc}
\toprule[1pt]
\multirow{2}{*}{\textbf{Method}}
& \multicolumn{2}{c}{\textbf{HumanEval}}
& \multicolumn{2}{c}{\textbf{MBPP}}
& \multicolumn{2}{c}{\textbf{GSM8K}}
& \multicolumn{2}{c}{\textbf{MATH500}} \\
\cmidrule(lr){2-3} \cmidrule(lr){4-5} \cmidrule(lr){6-7} \cmidrule(lr){8-9}
& Acc & Speed & Acc & Speed & Acc & Speed & Acc & Speed \\
\midrule
Standard decoder              & \textcolor{DarkGrey}{56.1} & \textcolor{DarkGrey}{1.0$\times$} & \textcolor{DarkGrey}{56.2} & \textcolor{DarkGrey}{1.0$\times$} & \textcolor{DarkGrey}{80.2} & \textcolor{DarkGrey}{1.0$\times$} & \textcolor{DarkGrey}{38.6} & \textcolor{DarkGrey}{1.0$\times$} \\
\sysname{}                    & \textbf{56.7} & 5.5$\times$ & \textbf{57.2} & 7.2$\times$ & \textbf{81.1} & \textbf{6.8$\times$} & \textbf{45.0} & \textbf{5.4$\times$} \\
\midrule
Threshold-only ($\tau{=}0.95$) & 52.4 & 2.7$\times$ & 57.0 & 4.6$\times$ & 80.4 & 4.5$\times$ & 39.4 & 2.4$\times$ \\
Threshold-only ($\tau{=}0.9$)  & 52.4 & 3.0$\times$ & 57.0 & 4.9$\times$ & 80.0 & 4.9$\times$ & 38.8 & 2.6$\times$ \\
Threshold-only ($\tau{=}0.8$)  & 50.0 & 3.4$\times$ & 56.2 & 5.7$\times$ & 79.6 & 5.2$\times$ & 37.2 & 3.0$\times$ \\
Threshold-only ($\tau{=}0.7$)  & 46.3 & 5.2$\times$ & 52.4 & 6.9$\times$ & 77.2 & 5.6$\times$ & 37.0 & 3.7$\times$ \\
Threshold-only ($\tau{=}0.6$)  & 42.7 & \textbf{5.7$\times$} & 48.8 & \textbf{7.7$\times$} & 74.8 & 6.5$\times$ & 32.6 & 5.1$\times$ \\
\bottomrule[1pt]
\end{tabular}
\caption{Controlled comparison with threshold-only parallel drafting on Dream-Ins-7B at sequence length $512$. Accuracy is reported in percent and speed is wall-clock speedup over the standard decoder. All methods use temperature $0$ and block size $32$.}
\label{tab:threshold_only}
\end{table*}

Table~\ref{tab:threshold_only} exposes the quality--speed trade-off of threshold-only decoding: lowering $\tau$ increases parallelism and speed but substantially reduces accuracy. For example, on HumanEval, reducing $\tau$ from $0.95$ to $0.6$ raises the speedup from $2.7\times$ to $5.7\times$ while lowering accuracy from $52.4$ to $42.7$. In contrast, \sysname{} preserves or improves upon the standard decoder on all four benchmarks while achieving speedups of $5.4\times$--$7.2\times$. Thus, its gains cannot be explained by the underlying threshold-based drafting rule alone; the anchor-guided generation and verification mechanisms are necessary to retain quality at high parallelism.


\subsection{Open-Ended Generation}
\label{app:open_ended}

The four benchmarks in Section~\ref{sec:main} all emit short, closed-form answers: numeric outputs for the math benchmarks and single-function code completions for the coding benchmarks. To test whether \sysname{} transfers to free-form generation, we evaluate two open-ended tasks at sequence length $512$: NQ-Open~\citep{kwiatkowski-etal-2019-natural,lee-etal-2019-latent} for open-domain question answering, scored by exact match (EM), and CNN/DailyMail~\citep{see-etal-2017-get} for abstractive summarization, scored by BERTScore, ROUGE-1 (R-1), and ROUGE-L (R-L).

\sysname{} improves both quality and speed on both tasks for both backbones (Table~\ref{tab:open_ended}). The largest accuracy gain is on Dream-Ins-7B for NQ-Open ($15.8 \to 17.5$ EM with a $2.1\times$ speedup), and BERTScore improvements are consistent across both models. These results indicate that the anchor mechanism transfers beyond closed-form benchmarks to settings where multiple valid outputs exist.

\subsection{Perturbation Diagnostics and Case Study}
\label{app:perturb_diag}

\begin{table}[H]
\centering
\small
\setlength{\tabcolsep}{4pt}
\renewcommand{\arraystretch}{0.95}
\begin{tabular}{l cc cc}
\toprule[1pt]
\multirow{2}{*}{\textbf{Method}}
& \multicolumn{2}{c}{\textbf{HumanEval}}
& \multicolumn{2}{c}{\textbf{GSM8K}} \\
\cmidrule(lr){2-3} \cmidrule(lr){4-5}
& Error (\%) $\downarrow$ & $R$ (\%) $\downarrow$ & Error (\%) $\downarrow$ & $R$ (\%) $\downarrow$ \\
\midrule
Baseline   & \textcolor{DarkGrey}{35.52} & \textcolor{DarkGrey}{59.73} & \textcolor{DarkGrey}{17.15} & \textcolor{DarkGrey}{48.35} \\
WINO       & 32.22 & 50.66 & 15.46 & 39.25 \\
\sysname{} & \textbf{28.17} & \textbf{16.28} & \textbf{11.25} & \textbf{11.83} \\
\bottomrule[1pt]
\end{tabular}
\caption{Perturbation diagnostics on LLaDA-Ins-8B (sequence length 512, block size 32). Errors are annotated token-level by Claude-Opus-4.6 against the reference solution, following the same protocol as Table~\ref{tab:k_ablation}. The error rate is the fraction of generated tokens marked as errors; the long-span ratio $R$ is the fraction of those erroneous tokens that belong to a contiguous run of at least $15$ such tokens.}
\label{tab:perturb_metrics}
\end{table}

We report two diagnostic metrics on the generated outputs of LLaDA-Ins-8B at sequence length $512$ with block size $32$, comparing the standard decoder (Baseline), WINO, and \sysname{}. For each generated sequence, we ask Claude-Opus-4.6 to align it against the reference solution and emit a token-level error mask using the protocol detailed in Appendix~\ref{app:claude_protocol}.

\sysname{} reduces both quantities on both benchmarks (Table~\ref{tab:perturb_metrics}), and the reduction in $R$ is consistently sharper than the reduction in error rate. On HumanEval, $R$ falls from $59.73$ to $16.28$ while the error rate falls from $35.52$ to $28.17$; on GSM8K, $R$ falls from $48.35$ to $11.83$ while the error rate falls from $17.15$ to $11.25$. This pattern is the empirical signature of Anchor-Perturbed Verification (Section~\ref{sec:pending_update}): by destabilizing mutually reinforcing token clusters, the probe converts long error runs into short scattered errors that are easier for downstream verification to catch.

\subsubsection{Claude Annotation Protocol}
\label{app:claude_protocol}

\textbf{Model and decoding settings.}
We use Claude-Opus-4.6 as the annotator. Claude is queried with its default decoding configuration, with no additional sampling parameters overridden. The outputs judged in the perturbation diagnostics are generated by LLaDA-Ins-8B with sequence length $512$, Semi-AR block size $32$, unmasking threshold $\tau=0.7$, and temperature $0$; \sysname{} uses the default consistency window $k=2$.

\textbf{Task-specific prompts.}
For HumanEval, we use the following prompt:
\begin{quote}
``You are an expert code evaluator. Given a problem specification, a reference solution, and a generated solution, identify contiguous token-level error spans in the generated solution. Mark spans that are semantically or functionally incorrect with respect to the specification or reference solution, including undefined identifiers, wrong operators or conditions, incorrect return expressions, incorrect control flow, syntax/runtime errors, or logic that would fail the intended tests. Do not mark stylistic differences, variable-name differences, or alternative implementations if they are functionally equivalent. Return JSON containing overall correctness, error spans with token indices, error type, rationale, and a token-level binary error mask.''
\end{quote}

For GSM8K, we use the following prompt:
\begin{quote}
``You are an expert mathematical reasoning evaluator. Given a problem, a reference solution, and a generated reasoning trace, identify contiguous token-level error spans in the generated output. Mark spans containing incorrect quantities, operations, equations, intermediate conclusions, reasoning dependencies, or final answers. Do not mark paraphrases or mathematically equivalent reasoning as errors. If an early mistake causes later steps to be wrong, mark both the original erroneous span and later spans that explicitly depend on the incorrect value or conclusion. Return JSON containing overall correctness, error spans with token indices, error type, rationale, and a token-level binary error mask.''
\end{quote}

\textbf{Annotation example.}
For LLaDA's output on GSM8K instance \#38, \sysname{} correctly interprets ``half as much the other two days'' as $1.5$ hours on each day, yielding $6$ total hours and a speed of $10$ mph. WINO instead treats the third run as half of the second day's $1.5$ hours, i.e., $0.75$ hours, yielding $5.25$ total hours and $11.33$ mph. Claude marks the initial causal error span (``another half of the second day ... 0.75 hours'') and the propagated error spans containing $0.75$, $5.25$, and $11.33$.

\textbf{Human validation.}
We randomly sample $100$ Claude-annotated outputs from each benchmark for manual evaluation by members of our group. Each annotation is classified as \textsc{Correct}, \textsc{Partially Correct}, or \textsc{Incorrect}. \textsc{Partially Correct} denotes cases in which the output contains a mixture of correct and incorrect span annotations. Table~\ref{tab:claude_validation} reports the resulting counts.

\begin{table}[t]
\centering
\small
\setlength{\tabcolsep}{5pt}
\begin{tabular}{lccc}
\toprule
\textbf{Benchmark} & \textbf{Correct} & \textbf{Partial} & \textbf{Incorrect} \\
\midrule
HumanEval & 97 & 3 & 0 \\
GSM8K & 98 & 2 & 0 \\
\bottomrule
\end{tabular}
\caption{Manual validation of $100$ Claude annotations per benchmark. Values are annotation counts.}
\label{tab:claude_validation}
\end{table}

\textbf{Case study.}
In HumanEval problem 92, the WINO decoder commits to \texttt{x==sum\_xy or y==sum\_xy}, where the identifier \texttt{sum\_xy} is never defined anywhere in the generated function. The expression is internally self-consistent, since both clauses share the same undefined symbol, and therefore evades local verification: any check that compares the two clauses against each other finds them mutually compatible. Under \sysname{}, the orthogonal probe destabilizes this cluster, the affected tokens are remasked, and the regenerated expression \texttt{(x==y+z) or (y==x+z)} closes the gap with no undefined identifiers.

\subsection{Per-Step Runtime Breakdown}
\label{app:runtime}

\begin{table}[!htbp]
\centering
\small
\setlength{\tabcolsep}{4pt}
\renewcommand{\arraystretch}{0.95}
\begin{subtable}[t]{\linewidth}
\centering
\begin{tabular}{l r}
\toprule[1pt]
\textbf{Component} & \textbf{Time (ms)} \\
\midrule
Baseline: Forward         & 74.3 \\
Baseline: Post-process    & 3.0 \\
\textbf{Baseline: Total}  & \textbf{77.3} \\
\midrule
\sysname{}: Forward                & 74.1 \\
\sysname{}: ATC management         & 1.3 \\
\sysname{}: Guided injection       & 3.4 \\
\sysname{}: Perturbed verification & 3.1 \\
\sysname{}: Post-process           & 3.0 \\
\textbf{\sysname{}: Total}         & \textbf{84.9} \\
\bottomrule[1pt]
\end{tabular}
\caption{Per-step component breakdown.}
\label{tab:runtime_components}
\end{subtable}

\begin{subtable}[t]{\linewidth}
\centering
\begin{tabular}{l c c}
\toprule[1pt]
\textbf{Metric} & \textbf{Baseline} & \textbf{\sysname{}} \\
\midrule
Average steps      & \textcolor{DarkGrey}{256} & 75.4 \\
Wall-clock (s)     & \textcolor{DarkGrey}{19.8}  & 6.4 \\
Speedup            & \textcolor{DarkGrey}{1.0$\times$} & \textbf{3.1$\times$} \\
\bottomrule[1pt]
\end{tabular}
\caption{End-to-end cost and speedup.}
\label{tab:runtime_endtoend}
\end{subtable}
\caption{Runtime breakdown on LLaDA-Ins-8B (sequence length 256, A100 80GB). Speedup is wall-clock over the standard decoder. \emph{Guided injection} and \emph{Perturbed verification} refer to Anchor-Guided Generation (Section~\ref{sec:mask_update}) and Anchor-Perturbed Verification (Section~\ref{sec:pending_update}), respectively.}
\label{tab:runtime}
\end{table}

We profile the per-step cost of \sysname{} on LLaDA-Ins-8B with sequence length 256 on a single NVIDIA A100 80GB GPU, averaging over the generated samples. Table~\ref{tab:runtime_components} breaks the cost down by component, and Table~\ref{tab:runtime_endtoend} reports the end-to-end consequence in steps and wall-clock time.

\sysname{} adds $7.6$ ms per step on average, a $9.8\%$ increase over the baseline's $77.3$ ms. The overhead is concentrated in the embedding-space update: $3.4$ ms for guided injection and $3.1$ ms for perturbed verification, both linear in the cache size $m$. Despite this per-step overhead, the average step count drops from $256$ to $75.4$, so wall-clock time falls from $19.8$ s to $6.4$ s, giving a $3.1\times$ end-to-end speedup.

\begin{table*}[!t]
\centering
\small
\setlength{\tabcolsep}{4pt}
\renewcommand{\arraystretch}{0.95}
\begin{tabular}{l l cc cc cc cc}
\toprule[1pt]
\multirow{2}{*}{\textbf{Dataset}} & \multirow{2}{*}{\textbf{Method}}
& \multicolumn{2}{c}{$\text{BS}{=}16$} & \multicolumn{2}{c}{$\text{BS}{=}32$} & \multicolumn{2}{c}{$\text{BS}{=}64$} & \multicolumn{2}{c}{$\text{BS}{=}128$} \\
\cmidrule(lr){3-4} \cmidrule(lr){5-6} \cmidrule(lr){7-8} \cmidrule(lr){9-10}
& & Acc & Speed & Acc & Speed & Acc & Speed & Acc & Speed \\
\midrule
\multirow{4}{*}{MATH500}
 & Baseline   & \textcolor{DarkGrey}{32.4} & \textcolor{DarkGrey}{1.0$\times$} & \textcolor{DarkGrey}{33.8} & \textcolor{DarkGrey}{1.0$\times$} & \textcolor{DarkGrey}{34.2} & \textcolor{DarkGrey}{1.0$\times$} & \textcolor{DarkGrey}{31.6} & \textcolor{DarkGrey}{1.0$\times$} \\
 & WINO       & 33.6 & 1.9$\times$ & 34.2 & 2.0$\times$ & 34.0 & 2.0$\times$ & 32.2 & 1.8$\times$ \\
 & Saber      & 34.0 & 1.7$\times$ & 34.8 & 1.8$\times$ & 34.4 & 1.8$\times$ & 32.4 & 1.7$\times$ \\
 & \sysname{} & \textbf{34.6} & \textbf{2.8$\times$} & \textbf{35.2} & \textbf{2.9$\times$} & \textbf{34.6} & \textbf{2.6$\times$} & \textbf{32.6} & \textbf{2.8$\times$} \\
\midrule
\multirow{4}{*}{MBPP}
 & Baseline   & \textcolor{DarkGrey}{37.8} & \textcolor{DarkGrey}{1.0$\times$} & \textcolor{DarkGrey}{36.8} & \textcolor{DarkGrey}{1.0$\times$} & \textcolor{DarkGrey}{36.8} & \textcolor{DarkGrey}{1.0$\times$} & \textcolor{DarkGrey}{34.2} & \textcolor{DarkGrey}{1.0$\times$} \\
 & WINO       & 36.6 & 1.8$\times$ & 36.2 & 1.9$\times$ & 37.4 & 2.0$\times$ & 36.4 & 1.8$\times$ \\
 & Saber      & 37.4 & 2.0$\times$ & 37.8 & 2.1$\times$ & 37.8 & 2.1$\times$ & 36.6 & 1.9$\times$ \\
 & \sysname{} & \textbf{38.2} & \textbf{2.5$\times$} & \textbf{38.6} & \textbf{2.5$\times$} & \textbf{38.0} & \textbf{2.3$\times$} & \textbf{36.8} & \textbf{2.6$\times$} \\
\bottomrule[1pt]
\end{tabular}
\caption{Block-size sweep on LLaDA-Ins-8B. Accuracy in percent; speed is wall-clock speedup over the standard decoder.}
\label{tab:block_size_sweep}
\end{table*}

\subsection{Sensitivity to Block Size and Drafting Strength}
\label{app:block_schedule}

The main comparisons in Section~\ref{sec:main} use the method-specific settings described in Appendix~\ref{sec:experiments_details}. The experiments below are controlled sensitivity analyses, rather than the configurations used throughout the main comparison. To assess whether the gains of \sysname{} carry over to other decoding configurations, we separately vary block size and drafting strength on MATH500 and MBPP with LLaDA-Ins-8B.

\textbf{Block size sweep.}
Table~\ref{tab:block_size_sweep} reports accuracy and wall-clock speedup for $\text{BS}\in\{16,32,64,128\}$. On MATH500, \sysname{} reaches $35.2$ accuracy at $\text{BS}{=}32$ with a $2.9\times$ speedup, exceeding WINO ($34.2$, $2.0\times$) and Saber ($34.8$, $1.8\times$); the ranking holds at every block size we tested. On MBPP, \sysname{} likewise dominates both baselines across all block sizes. Accuracy degrades for all methods at $\text{BS}{=}128$, consistent with the observation in Section~\ref{sec:main} that overly wide blocks weaken anchor supervision.

\begin{table*}[!t]
\centering
\small
\setlength{\tabcolsep}{4pt}
\renewcommand{\arraystretch}{0.95}
\begin{tabular}{l l cc cc cc cc}
\toprule[1pt]
\multicolumn{10}{c}{\emph{Drafting threshold $\tau_d$ (WINO, \sysname{}; $\tau_v$ fixed for WINO)}} \\
\midrule
\multirow{2}{*}{\textbf{Dataset}} & \multirow{2}{*}{\textbf{Method}}
& \multicolumn{2}{c}{$\tau_d{=}0.6$} & \multicolumn{2}{c}{$\tau_d{=}0.7$} & \multicolumn{2}{c}{$\tau_d{=}0.8$} & \multicolumn{2}{c}{$\tau_d{=}0.9$} \\
\cmidrule(lr){3-4} \cmidrule(lr){5-6} \cmidrule(lr){7-8} \cmidrule(lr){9-10}
& & Acc & Speed & Acc & Speed & Acc & Speed & Acc & Speed \\
\midrule
\multirow{2}{*}{MATH500}
 & WINO       & 33.0 & 2.9$\times$ & 33.6 & 2.3$\times$ & 34.2 & 2.0$\times$ & 34.8 & 1.5$\times$ \\
 & \sysname{} & \textbf{33.2} & \textbf{3.3$\times$} & \textbf{35.2} & \textbf{2.9$\times$} & \textbf{35.6} & \textbf{2.2$\times$} & \textbf{36.0} & \textbf{1.7$\times$} \\
\midrule
\multirow{2}{*}{MBPP}
 & WINO       & 36.2 & 2.9$\times$ & 36.2 & 1.9$\times$ & 36.8 & 1.6$\times$ & 37.4 & 1.3$\times$ \\
 & \sysname{} & \textbf{36.2} & \textbf{3.4$\times$} & \textbf{37.8} & \textbf{3.1$\times$} & \textbf{38.6} & \textbf{2.5$\times$} & \textbf{38.6} & \textbf{2.0$\times$} \\
\midrule
\multicolumn{10}{c}{\emph{Minimum draft count $n$ (Saber)}} \\
\midrule
\multirow{2}{*}{\textbf{Dataset}} & \multirow{2}{*}{\textbf{}}
& \multicolumn{2}{c}{$n{=}4$} & \multicolumn{2}{c}{$n{=}3$} & \multicolumn{2}{c}{$n{=}2$} & \multicolumn{2}{c}{$n{=}1$} \\
\cmidrule(lr){3-4} \cmidrule(lr){5-6} \cmidrule(lr){7-8} \cmidrule(lr){9-10}
& & Acc & Speed & Acc & Speed & Acc & Speed & Acc & Speed \\
\midrule
MATH500 & Saber & 32.4 & 2.5$\times$ & 34.2 & 1.9$\times$ & 34.8 & 1.8$\times$ & 35.0 & 1.4$\times$ \\
MBPP    & Saber & 35.4 & 2.4$\times$ & 37.2 & 2.7$\times$ & 37.8 & 2.1$\times$ & 37.8 & 1.8$\times$ \\
\bottomrule[1pt]
\end{tabular}
\caption{Controlled ablation on drafting strength with LLaDA-Ins-8B; these are not the configurations used throughout the main comparison. For WINO and \sysname{}, we vary the drafting threshold $\tau_d$, keeping WINO's verification threshold fixed at $\tau_v{=}0.9$. Saber's confidence threshold is computed dynamically and cannot be fixed manually, so we instead vary the minimum draft count $n$ exposed by its official implementation. The two panels therefore use method-specific control variables and are not row-aligned.}
\label{tab:schedule_sweep}
\end{table*}

\textbf{Drafting-strength sweep.}
Table~\ref{tab:schedule_sweep} is a controlled ablation on drafting strength, not a record of the configurations used throughout the main comparison. In the top panel, we vary the drafting threshold $\tau_d$ for WINO and \sysname{}; for WINO, the verification threshold remains fixed at $\tau_v{=}0.9$, isolating the effect of drafting aggressiveness. Saber instead computes its confidence threshold dynamically from previously unmasked tokens, so the same fixed-threshold sweep is not available. We therefore vary $n$ from Saber's official implementation, which sets the minimum number of tokens drafted at each step, and report the results in a separate panel. Consequently, the two panels are diagnostic sweeps over method-specific controls and their columns are not matched configurations. \sysname{} outperforms WINO throughout the shared $\tau_d$ range on both datasets (for example, on MBPP at $\tau_d{=}0.8$, \sysname{} reaches $38.6$ versus WINO's $36.8$). Across the respective sweeps, the overall trend remains stable, indicating that the benefits of the ATC and orthogonal probe are not tied to a single drafting strength.

\section{Theoretical Analysis}

\subsection{Exponential Suppression of False Anchors}
\label{app:theory}
In this section, we provide a formal justification for the effectiveness of the Anchor Token Cache (ATC). We model the decoding process as a signal-plus-noise system to demonstrate that the temporal consistency requirement (Eq.~\eqref{eq:anchor_definition}) exponentially suppresses the selection of incorrect tokens.

We consider a specific masked position $i$ with ground-truth token $x_{0}^{i}$. Let $w \in V$ be an incorrect candidate token ($w \neq x_{0}^{i}$). We posit the following assumptions regarding the model's behavior and the decoding environment:

\begin{enumerate}
    \item \textbf{Semantic Capability:} We assume the model is semantically capable, meaning that in a noise-free (ideal) context, the intrinsic confidence for the ground truth $x_{0}^{i}$ is strictly higher than that for the incorrect candidate $w$. That is, there exists a positive \textit{semantic margin} $\Delta_{\text{sem}}$:
    \begin{equation}
        p^*(x_{0}^{i}) - p^*(w) = \Delta_{\text{sem}} > 0,
    \end{equation}
    where $p^*(\cdot)$ denotes the intrinsic, denoised probability of a token.

    \item \textbf{i.i.d. Noise:} The observed probability $p_{\theta}(x_{0}^{i}=v|x_{s})$ at any step $s$ is the sum of the intrinsic prior $p^*(v)$ and a stochastic noise term $\epsilon_{s,v}$. We further assume that the differential noise
    $\eta_{s} \triangleq \epsilon_{s,x_{0}^{i}} - \epsilon_{s,w}$ is symmetric around $0$ and i.i.d.\ across decoding steps $s$ (this holds, e.g., when $\epsilon_{s,x_{0}^{i}} \perp \epsilon_{s,w}$ with each marginally symmetric, or when the noise vector is jointly sign-symmetric).

\end{enumerate}

We now prove Proposition~\ref{prop:false_anchor}.

\begin{proof}
\textbf{Step 1: Condition for Single-Step Error.}
For the incorrect token $w$ to be identified as the prediction $\hat{a}^{i}_{s}$ at step $s$, it must achieve the highest probability among all vocabulary items. A necessary condition for this is that $w$ must specifically surpass the ground truth $x_{0}^{i}$:
\begin{equation}
    \hat{a}^{i}_{s} = w \implies p_{\theta}(x_{0}^{i}=w|x_{s}) > p_{\theta}(x_{0}^{i}=x_{0}^{i}|x_{s}).
\end{equation}
Using the noise decomposition from Assumption 2 ($p_{\theta} = p^* + \epsilon$), this inequality becomes:
\begin{equation}
    p^*(w) + \epsilon_{s,w} > p^*(x_{0}^{i}) + \epsilon_{s,x_{0}^{i}}.
\end{equation}
We define the \textit{differential noise} at step $s$ as $\eta_{s} \triangleq  \epsilon_{s,x_{0}^{i}} - \epsilon_{s,w}$. Rearranging the inequality yields: $\eta_{s} < p^*(w) - p^*(x_{0}^{i}) = -\Delta_{\text{sem}}$.
This means that the error occurs only when the differential noise $\eta_{s}$ is sufficiently negative to overcome the positive semantic margin $\Delta_{\text{sem}}$. Then, substituting the semantic margin $\Delta_{\text{sem}}$ from Assumption 1, the condition for a ranking reversal in a single step is $-\eta_{s} > \Delta_{\text{sem}}$.

\textbf{Step 2: Probability of Ranking Reversal.}
Let $q$ be the probability that the noise fluctuation is sufficient to overcome the semantic margin $q \triangleq \mathbb{P}(\eta_{s} > \Delta_{\text{sem}})$. Since the noise distribution is assumed to be symmetric, the probability of the differential noise falling into the left tail is equal to that of the right tail: $q \triangleq \mathbb{P}(\eta_{s} < -\Delta_{\text{sem}}) = \mathbb{P}(\eta_{s} > \Delta_{\text{sem}}).$ Consequently, $q < 0.5$ (and typically $q \ll 0.5$ for strong models).

\textbf{Step 3: Temporal Consistency and Exponential Decay.}
According to Eq.~\eqref{eq:anchor_definition}, the event that position $i$ is identified as an anchor with the incorrect token $w$ requires that $\hat{a}^{i}_{s} = w$ for all $s \in \{t-k+1, \dots, t\}$. By Step~1, each event $\{\hat{a}^{i}_{s} = w\}$ is contained in $\{\eta_{s} < -\Delta_{\text{sem}}\}$, so we may bound the joint probability by the joint probability of the $\eta$-events, which factorizes under the i.i.d.\ assumption:

\begin{equation}
\begin{aligned}
    \mathbb{P}&\!\left(\hat{a}^{i}_{t-k+1} {=} \cdots {=} \hat{a}^{i}_{t} = w\right) \\
    &\le \mathbb{P}\!\left(\eta_{t-k+1} < -\Delta_{\text{sem}}, \dots, \eta_{t} < -\Delta_{\text{sem}}\right) \\
    &= \prod_{j=0}^{k-1} \mathbb{P}(\eta_{t-j} < -\Delta_{\text{sem}}) \\
    &= q^k.
\end{aligned}
\end{equation}

If we rewrite $q^k$ in exponential form $q^k = \exp(-k \ln(1/q))$, let $\lambda = \ln(1/q)$, and note that $q < 1 \implies \lambda > 0$, we then obtain:
\begin{equation}
    \mathbb{P}\!\left(\hat{a}^{i}_{t-k+1} {=} \cdots {=} \hat{a}^{i}_{t} = w\right) \le \exp(-\lambda k).
\end{equation}
This completes the proof, demonstrating that increasing the window size $k$ exponentially suppresses the likelihood of false anchors driven by stochastic noise.
\end{proof}

\begin{remark}
While this mechanism suppresses false positives, a correct token $x_0^i$ with $p^*(x_0^i) > \tau$ may be excluded from the ATC if a single noise spike $\epsilon_{s, x_0^i}$ pushes its top-1 prediction off $x_0^i$, or its confidence below $\tau$ so that it leaves $\mathcal{T}_t$, at any of the $k$ consecutive steps. ASRD prioritizes minimizing this ``False Positive'' (accepting a wrong anchor) over a ``False Negative'' (missing a true anchor) due to the asymmetric cost of errors in diffusion decoding: Accepting an incorrect anchor introduces ``toxic context'', which actively steers the generation of $[\text{MASK}]$ tokens toward invalid trajectories via the attention mechanism. This leads to irreversible error propagation. In comparison, a missed valid anchor merely results in a sparser guidance signal. In this case, the model defaults to its standard decoding behavior (baseline), ensuring that ASRD never performs worse than the unguided baseline.
\end{remark}

\begin{remark}
    The i.i.d. assumption serves to derive a tractable upper bound that captures the qualitative behavior of the consistency mechanism. In practice, even with temporally correlated noise, the probability of an incorrect token persisting across $k$ steps diminishes significantly, as supported by our empirical results.
\end{remark}

\subsection{Anchor-Induced Attention Bias}
\label{appen_pro2}

In this section, we provide a calculation-based discussion to illustrate how
anchor-guided embedding injection influences attention behavior and downstream
predictions. We emphasize that the following analysis is intended as a
mechanistic intuition, rather than a formal guarantee under the full Transformer
dynamics.

\paragraph{\textbf{Intuition: Anchor-Induced Attention Bias.}}
Consider a self-attention layer, where queries, keys, and values are computed as
linear projections of token representations between a mask token at position $i$ and context token at position $u$:
$q_i = W_Q \mathbf{e}^{i}_t$, $k_u = W_K \mathbf{e}^{u}_t$, and $v_u = W_V \mathbf{e}^{u}_t$.
Here, for a masked position $i$, where $\mathbf{e}^{i}_t = \mathbf{e}_{\text{mask}}$ initially, the anchor-guided generation replaces the uninformative \texttt{[MASK]} embedding $\mathbf{e}_{\text{mask}}$, with the anchor centroid $\bar{c}_t = \frac{1}{|\mathcal{I}_t|}\sum_{j \in \mathcal{I}_t} \mathbf{e}^{j}_t$ defined in Section~\ref{sec:mask_update},

\begin{equation}
\mathbf{e}'^{i}_t = (1-\gamma_t^i)\, \mathbf{e}^{i}_t + \gamma_t^i\, \bar{c}_t ,
\end{equation}

where $\mathcal{I}_t$ denotes the Anchor Token Cache (ATC) at decoding step $t$. For notational clarity, we treat the per-position magnitude $\gamma_t^i$ (defined in Section~\ref{sec:mask_update}) as a fixed scalar throughout this derivation; the entropy-adaptive form only modulates the magnitude of the induced bias per position, not its geometry. Due to the linearity of the query projection, this replacement induces a corresponding shift in the query representation. We have $q_i' = W_Q \mathbf{e}'^{i}_t = (1-\gamma_t^i)\, q_i + \gamma_t^i\, q_{\mathcal{I}_t}$, where $q_{\mathcal{I}_t} = W_Q \bar{c}_t$. Thus, the attention logit between the masked position $i$ and an arbitrary position $u$ becomes

\begin{equation}
\begin{aligned}
s_{iu}' &= \frac{(q_i')^\top k_u}{\sqrt{d}} \\
&= (1-\gamma_t^i)\frac{(q_i)^\top k_u}{\sqrt{d}} + \gamma_t^i\frac{(q_{\mathcal{I}_t})^\top k_u}{\sqrt{d}} \\
&= (1-\gamma_t^i)\, s_{iu} + \gamma_t^i\, s_{{\mathcal{I}_t}u},
\end{aligned}
\end{equation}

\noindent where $s_{iu}$ and $s_{{\mathcal{I}_t}u}$ are the attention logits if we simply replace the mixed token by original mask token or purely aggregate from anchor tokens. Therefore, although we do not assume any specific structure on $W_Q$ or $W_K$, the injected query $q_i'$ is biased toward the subspace spanned by anchor embeddings. When anchor tokens provide a more reliable semantic signal than unverified tokens, their keys tend to exhibit higher alignment with this shifted query. As a result, the additive change in attention logits is typically larger for anchor tokens than for noisy pending tokens.

More generally, the effect of anchor-guided injection extends beyond anchor
tokens themselves to arbitrary context tokens. Consider two context tokens $u_1$ and $u_2$ that are indistinguishable under the original mask query, i.e., $\langle k_{u_1}, q_{i} \rangle \approx \langle k_{u_2}, q_{i} \rangle$, but the keys' inner product with query vector on anchor tokens' embedding satisfy

\begin{equation}
\langle k_{u_1}, q_{\mathcal{I}_t} \rangle > \langle k_{u_2}, q_{\mathcal{I}_t} \rangle ,
\end{equation}

\noindent which implies that $k_{u_1}$ is closer to the collection of anchor tokens. Substituting into the interpolation formula yields
\begin{equation}
s'_{i u_1} >  s'_{i u_2}.
\end{equation}

To isolate the effect of this alignment, consider the case where the remaining
context is otherwise comparable, i.e., the original logits $s_{iu_1}$ and
$s_{iu_2}$ are of similar magnitude prior to anchor injection. Under this
condition, the difference in attention logits after injection is dominated by
the difference in $\gamma_t^i\, s_{{\mathcal{I}_t}u}$, leading to a relative increase in the
softmax attention weight for token $u_1$ compared to $u_2$.

This comparison illustrates that anchor-guided injection does not merely
increase attention to anchor tokens themselves, but more generally amplifies
the influence of context tokens whose representations are closer to the
anchor-induced direction, while suppressing contributions from less aligned or
noisy tokens. In this sense, the mechanism selectively reshapes the effective
context seen by the masked position in a direction consistent with anchor
semantics.

\end{document}